\newtheorem{definition}{Definition}[section]
\newtheorem{proposition}{Proposition}[section]
\newtheorem{remark}{Remark}[section]
\def\eqref#1{equation~\ref{#1}}
\def\1{\bm{1}}
\def\rmJ{{\mathbf{J}}}
\DeclareMathAlphabet{\mathsfit}{\encodingdefault}{\sfdefault}{m}{sl}
\SetMathAlphabet{\mathsfit}{bold}{\encodingdefault}{\sfdefault}{bx}{n}
\crefname{appendix}{appendix}{appendices}
\Crefname{appendix}{Appendix}{Appendices}
\title{%
    Accelerating Inference of Discrete Autoregressive\par
    Normalizing Flows by Selective Jacobi Decoding
}
\renewcommand{\thefootnote}{\fnsymbol{footnote}}
\author{%
    \name Jiaru~Zhang\footnotemark{} \setcounter{footnote}{1}      \email jiaru@purdue.edu \\
    \addr Institute of Physical Artificial Intelligence \\ Purdue University \\
    \AND
    \name Juanwu~Lu     \email juanwu@purdue.edu \\
    \addr College of Engineering \\ Purdue University \\
    \AND
    \name Xiaoyu~Wu     \email xw105@rice.edu\\
    \addr Rice University \\
    \AND
    \name Ziran~Wang\footnotemark{} \setcounter{footnote}{1}    \email ziran@purdue.edu \\
    \addr College of Engineering \\ Purdue University \\
    \AND
    \name Ruqi~Zhang\footnotemark{}    \email ruqiz@purdue.edu \\
    \addr Department of Computer Science \\ Purdue University
}
\begin{document}
\footnotetext[1]{Correspondence to: Jiaru Zhang. \setcounter{footnote}{1} 
\begin{NoHyper}\footnotemark{}\end{NoHyper}Equal advising.}
\renewcommand{\thefootnote}{\arabic{footnote}}
\setcounter{footnote}{0}

\maketitle

\begin{abstract}

Discrete normalizing flows are promising generative models with advantages such as analytical log-likelihood computation and end-to-end training. 
However, the architectural constraints to ensure invertibility and tractable Jacobian computation limit their expressive power and practical usability. 
Recent advancements utilize autoregressive modeling, significantly enhancing expressive power and generation quality. 
Nevertheless, such sequential modeling inherently restricts parallel computation during inference, leading to slow generation that impedes practical deployment.
In this paper, we first identify that strict sequential dependency in inference is unnecessary to generate high-quality samples. 
We observe that sub-variables in sequential modeling can also be approximated without strictly conditioning on all preceding sub-variables. 
Moreover, the models tend to exhibit low dependency redundancy in the initial layer and higher redundancy in subsequent layers.
Leveraging these observations, we propose to selectively use Jacobi decoding strategy that accelerates its autoregressive inference through parallel iterative optimization. 
Theoretical analyses demonstrate the method's superlinear convergence rate and guarantee that the number of iterations required is no greater than the original sequential approach.
Empirical evaluations across multiple datasets validate the generality and effectiveness of our acceleration technique, achieving up to 4.7 times faster inference on modern normalizing flow models while preserving generation quality.

\end{abstract}
\section{Introduction}
\label{sec: intro}

Discrete normalizing flow models \textcolor{black}{(\textit{a.k.a. normalizing flows})} have emerged as promising generative models by modeling the transformation between a data variable and a latent Gaussian variable through a series of mappings, where the mapping functions are constructed as invertible functions parameterized by neural networks. 
This approach allows for end-to-end training with a single loss function, ensuring consistency between encoding and decoding. 
Moreover, the model's structure facilitates the computation of the analytical log-likelihood. 
These benefits have attracted significant interest from the research community, positioning discrete normalizing flow models as a compelling tool for generative modeling \citep{LaurentNICE,dinh2017density,ho2019flow++,NEURIPS2018_d139db6a,papamakarios2021normalizing}. 
 
Although discrete normalizing flow models are theoretically rigorous, they often exhibit limited generation capabilities in practice due to unavoidable architectural constraints imposed by the requirement for invertible mappings and the need to compute Jacobian matrices. 
For example, a classical construction of the invertible function, i.e., the affine coupling layer, is to split the input into two parts, and the output is obtained by the concatenation of those processed separately by trainable networks \citep{LaurentNICE,dinh2017density,NEURIPS2018_d139db6a}. 
This formulation ensures the analytical solution of both the invertible function and the Jacobian matrix, but it also yields relatively limited expressive power and less compatible network architectures, which affect the quality of generated samples and practical applications.
To address this, the recently proposed TarFlow \citep{zhai2024normalizing} model employs a block autoregressive architecture inspired by autoregressive normalizing flows \citep{NIPS2016_ddeebdee,NIPS2017_6c1da886}.
It splits the input into a longer sequence rather than a couple and constructs the invertible function using masked autoregressive transformations. 
This autoregressive modeling can be naturally incorporated into the causal vision transformer architecture, which provides a powerful representation, enabling TarFlow to achieve state-of-the-art performance in both density estimation and image synthesis. 

However, while such autoregressive sequential modeling endows the model with powerful generative capabilities, it also incurs high parallel computational complexity during inference.
Concretely, the inverse function of the sequential construction is an autoregressive inference process, meaning that each new sub-variable is generated sequentially, relying on all of the previously generated sub-variables.
It limits parallel computation, thereby slowing generation speeds and restricting practical applications, as also noted in previous work \citep{zhai2024normalizing}.

In this work, we first identify that strict sequential conditioning contains significant, layer-varying redundant dependencies during inference. 
Specifically, we observe that subsequent sub-variables can still be approximated without the nearest preceding sub-variables. 
Moreover, we find that the amount of redundancy varies substantially across layers, with the later layers exhibiting substantially more redundancy than the first.
Motivated by these observations, we propose to selectively use the Jacobi decoding approach to accelerate the inference of discrete autoregressive normalizing flow models. 
Our approach leverages parallel iterative optimization to achieve fast convergence to high-fidelity samples, without requiring additional training or modifications to the original model architecture, and is backed by theoretical convergence guarantees of superlinear speed and finite convergence.
Experimental results on diverse datasets, including CIFAR-10, CIFAR-100, and AFHQ \citep{choi2020stargan}, verify the generality and effectiveness of the proposed acceleration approach. 
The code is available at \url{https://github.com/lan-qing/SJD}.
In summary, our contributions include:

\begin{itemize}
    \item [\textbf{C1}] We discover that strict dependency on the original sequential inference of discrete autoregressive normalizing flows contains redundancy, and the quantity of the redundancy varies across different layers, based on our theoretical analysis and empirical observation.
    
    \item [\textbf{C2}] We introduce an approach to accelerate inference in discrete autoregressive normalizing flows by selectively applying Jacobi decoding in inference. Theoretical analysis demonstrates superlinear convergence and a finite convergence guarantee. 

    \item [\textbf{C3}] We conduct comprehensive experiments to validate the effectiveness of our approach, showing up to 4.7 times speed improvements with little impact on generation quality across diverse datasets, including CIFAR-10, CIFAR-100, and AFHQ.
\end{itemize}
\section{Related Work}
\label{sec: related}

\paragraph{Inference Acceleration of Generative Models.}Inference acceleration is critical for the practical application of generative models, and researchers have proposed various effective strategies for different generative models.
For Diffusion Models (DMs), techniques such as improved numerical solvers \citep{dockhorn2022genie,lu2022dpm,songdenoising}, knowledge distillation \citep{salimans2022progressive}, and consistency modeling \citep{song2023consistency} have significantly reduced sampling times.
For Variational Autoencoders (VAEs) and Generative Adversarial Networks (GANs), methods like network pruning \citep{kumar2023coronetgan,Saxena2024dynamic}, quantization \citep{andreev2022quantization}, and knowledge distillation \citep{aguinaldo2019compressing,yeo2024nickel} are widely adopted to enhance inference efficiency. 
These approaches, however, typically leverage specific architectural properties or training paradigms inherent to these models. 
Consequently, they are generally not directly transferable to inference acceleration of autoregressive normalizing flows.
To the best of our knowledge, this work is the first to explore the acceleration of inference for normalizing flow models.

\paragraph{Jacobi Decoding.} Inspired by the research on non-linear equation solutions \citep{ortega2000iterative},
Jacobi decoding has emerged as a promising approach to accelerating neural network inference. 
It aims to break the sequential dependency by reformulating generation as an iterative process of solving a system of equations, often framed as a fixed-point problem.
\citet{song2021accelerating} first developed a theoretical framework for interpreting feedforward computation as the solution of nonlinear equations and revealed the significant potential of Jacobi decoding for networks such as RNNs and DenseNets.
\citet{santilli2023accelerating} further confirms the effectiveness of Jacobi decoding on the language generation task.
It was further improved with additional fine-tuning to keep the consistency of decoded tokens \citep{kou2024cllms}.
In image generation, \citet{teng2025accelerating} explored the inference acceleration by the combination of Jacobi decoding with a probabilistic criterion for token acceptance on autoregressive text-to-image generation models.  
Although previously applied to language and image generation, these methods commonly encounter issues such as quality degradation \citep{teng2025accelerating} and limited acceleration in discrete token spaces \citep{santilli2023accelerating}. 
\section{Methodology}
\label{sec: method}

This section introduces the method that selectively applies Jacobi decoding to accelerate discrete autoregressive normalizing flows. It improves inference efficiency by breaking dependencies among sub-variables.
We begin with an introduction to normalizing flows (\Cref{sec: formulation}). Motivated by observations of sequential redundancy and its depthwise heterogeneity (\Cref{sec: motivation}), we propose parallel inference using Jacobi iterations (\Cref{sec: parallel-inference}). We show that this method converges superlinearly and in finite time (\Cref{sec: convergence}). In addition, we propose a strategy that applies parallel Jacobi decoding only to higher-redundancy layers, thereby further improving efficiency (\Cref{sec: selective}). To promote readability, we also present a table of notations in~\Cref{sec: notation}.

\subsection{Discrete Autoregressive Normalizing Flow}
\label{sec: formulation}


Normalizing flow is a family of generative models that explicitly learn a differential bijection $\bm{x}=f(\bm{z})$ between a latent random variable $\bm{z}$ and data $\bm{x}$ by leveraging the change of variable law~\citep{LaurentNICE,pmlr-v37-rezende15}. The optimal transformation is given by maximizing the log-likelihood of observed data
\begin{equation}
    f^{\ast}\gets\underset{f}{\arg\max}\log{p}(\bm{x})=\log{p}(\bm{z})-\log\det\rmJ_{f},
    \label{eq: normalzing-flow}
\end{equation}
where $\rmJ_{f}$ is the Jacobian matrix of $f$. To facilitate capturing of more complex data distribution $\bm{x}\sim{p_{\text{data}}(\bm{x})}$, a discrete normalizing flow learns not a single but a cascade of $K$ intermediate bijections such that
\begin{equation}
    \bm{x}=f(\bm{z}_{K})\triangleq\left(f_{0}\circ{f_{1}}\circ\ldots\circ{f_{K-1}}\right)(\bm{z}_{K}).
    \label{eq: discrete-nf}
\end{equation}
The optimal set of bijections is then given by
\begin{equation}
    f^{\ast}\gets\underset{f_{0},\ldots,f_{K-1}}{\arg\max}\log{p(\bm{z}_{K})}-\sum\limits_{k=0}^{K-1}\log\det\rmJ_{f_{k}}.
\end{equation}

However, properly constructing the bijections $f_{k}$ for high-dimensional variables is tricky. A commonly adopted method, coupling-based normalizing flow~\citep{LaurentNICE}, splits the high-dimensional $\bm{z}_{k}$ into a pair of sub-variables $\bm{z}_{k}=\left[\bm{z}_{k,1}, \bm{z}_{k,2}\right]^\intercal$, and has inspired consecutive works that combine neural networks for modeling by constructing building blocks such as the real-valued non-volume preserving (RealNVP)~\citep{dinh2017density}, invertible $1\times 1$ convolution~\citep{NEURIPS2018_d139db6a}, and self-attention layers~\citep{ho2019flow++}. 

More recent studies propose an autoregressive paradigm for discrete normalizing flow, which extends the above formulation by arbitrarily splitting random variables into a sequence of $L$ sub-variables~\citep{NIPS2016_ddeebdee, NIPS2017_6c1da886}. 
Then, the bijections $f_{k}$ can be defined by 
\begin{equation}
    \begin{aligned}
        \bm{z}_{k+1}&=
        f_{k}(\bm{z}_{k})=f_{k}\left(\begin{bmatrix}\bm{z}_{k,1} & \bm{z}_{k,2} & \ldots & \bm{z}_{k,L}\end{bmatrix}^{\intercal}\right) \\
        &\triangleq \begin{bmatrix}\boldsymbol{z}_{k,1} \\ \left(\boldsymbol{z}_{k,2}-g_{k}(\boldsymbol{z}_{k,<2})\right)\odot\exp(s_{k}(\boldsymbol{z}_{k,<2})) \\ \vdots \\ \left(\boldsymbol{z}_{k,L}-g_{k}(\boldsymbol{z}_{k,<L})\right)\odot\exp(s_{k}(\boldsymbol{z}_{k,<L}))\end{bmatrix}, 
    \end{aligned}
    \label{eq: auto-nf}
\end{equation} 
where $\odot$ denotes Hadamard product, $s_{k}(\cdot)$ and $g_{k}(\cdot)$ are functions to learn during training. 
Note that a permutation (e.g., reversing) of $\boldsymbol{z}_{k+1}$ is also applied after each transformation to ensure all locations in the sequence can be transformed through the entire flow, while we omit it for simplicity. 
The bijection above naturally yields an inverse:
\begin{equation}
    \begin{aligned}
    \boldsymbol{z}_{k}
    &=f^{-1}_{k}(\boldsymbol{z}_{k+1})=f^{-1}_{k}\left(\begin{bmatrix}\bm{z}_{k+1,1} & \ldots & \bm{z}_{k+1,L}\end{bmatrix}^{\intercal}\right)\\
    &\triangleq\begin{bmatrix}\boldsymbol{z}_{k+1,1} \\ \boldsymbol{z}_{k+1, 2}\odot\exp(-s_{k}\left(\boldsymbol{z}_{k,<2})\right) + g_{k}(\boldsymbol{z}_{k,<2}) \\ \vdots \\ \boldsymbol{z}_{k+1,L}\odot\exp(-s_{k}\left(\boldsymbol{z}_{k,<L})\right) + g_{k}(\boldsymbol{z}_{k,<L})\end{bmatrix}. 
    \end{aligned}
    \label{eq: inverse-auto-nf}
\end{equation}

Discrete autoregressive normalizing flows have shown superiority in density estimation and image generation~\citep{zhai2024normalizing} by integrating them into architectures such as the causal vision transformer. However, a critical issue arises during inference, which involves computing the inverse transformation. As suggested by~\eqref{eq: inverse-auto-nf}, generating $\boldsymbol{z}_{k,l}$ depends on all previous generations $\boldsymbol{z}_{k,<l}$, which prohibits parallel computation and leads to slow sampling that restricts practical use cases. To address this issue, we leverage two fundamental observations on redundancy, presented in the next section.

\subsection{Redundancy}
\label{sec: motivation}

\textbf{Sequential Redundancy.} As shown in~\eqref{eq: inverse-auto-nf}, the generation process defined by the inverse transformation reveals a sequential dependency between each sub-variable and all preceding sub-variables. 
Consequently, the original inference procedure enforces a strictly sequential generation paradigm, significantly limiting generation speed.
We hypothesize that this strict dependence is redundant, particularly for data such as images, which exhibit inherent spatial locality and continuity. 
Therefore, an element $\boldsymbol{z}_{k,l}$ might be reasonably inferred even without precise, up-to-the-moment information from all its predecessors.

To validate our hypothesis, we conducted experiments using a straightforward transformation during inference, where the transformation step for element $\boldsymbol{z}_{k,l}$ is modified to ignore information from the $o$-nearest preceding elements in the sequence by
\begin{equation}
    \label{eq: mask}
       \boldsymbol{z}_{k,l}= \boldsymbol{z}_{k+1,l}\odot\exp(-s_{k}(\boldsymbol{z}_{k,<(l-o)}))+g_{k}(\boldsymbol{z}_{k,<(l-o)}),
\end{equation}
for $l>1$, where $\boldsymbol{z}_{k,<(l-o)}$ is obtained by masking out the nearest $o$ preceding sub-variables in the attention operation. The experimental results shown in~\Cref{fig: nd} indicate that while the image quality diminishes as more of these nearest preceding sub-variables are removed, the model is still capable of generating meaningful images. It supports the claim that strict sequential dependencies contain potentially exploitable redundancy.
 \begin{figure}
  \centering
      \begin{subfigure}{0.45\linewidth}
    \includegraphics[width=\linewidth]{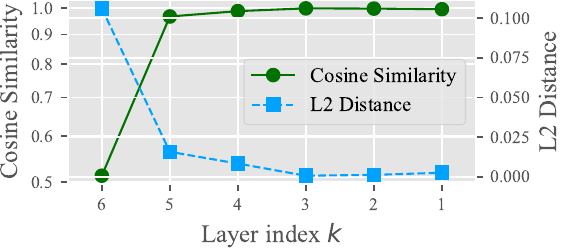}
    \caption{CIFAR-100}
  \end{subfigure}
  \begin{subfigure}{0.45\linewidth}
    \includegraphics[width=\linewidth]{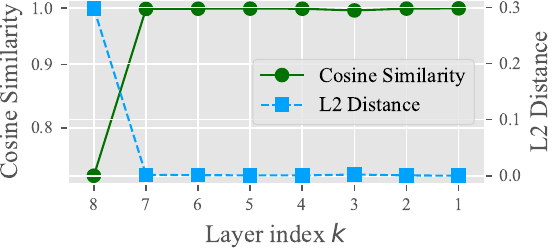}
    \caption{AFHQ}
  \end{subfigure}
  \caption{Cosine similarities and L2 distances between layer outputs from standard inference and inference with $o=5$ nearest preceding dependencies masked. Results of $o=1$ and $o=2$ are available in~\Cref{fig: csf}.}
  \label{fig: cs}
 \end{figure}

\begin{figure*}[t]
  \begin{center}
      \begin{subfigure}{0.24\linewidth}
    \includegraphics[width=\linewidth]{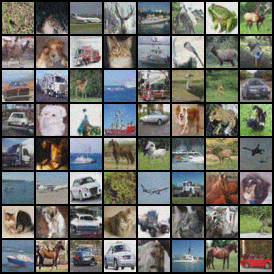}
    \caption{CIFAR-10, groundtruth}
  \end{subfigure}
  \begin{subfigure}{0.24\linewidth}
    \includegraphics[width=\linewidth]{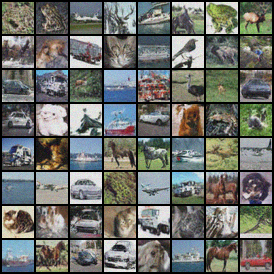}
    \caption{CIFAR-10, $o=1$}
  \end{subfigure}
  \begin{subfigure}{0.24\linewidth}
    \includegraphics[width=\linewidth]{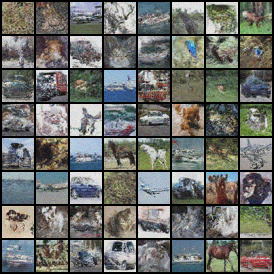}
    \caption{CIFAR-10, $o=2$}
  \end{subfigure}
    \begin{subfigure}{0.24\linewidth}
    \includegraphics[width=\linewidth]{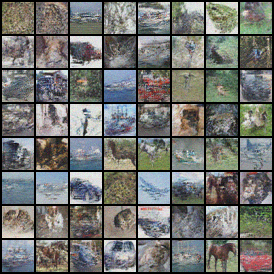}
    \caption{CIFAR-10, $o=5$}
  \end{subfigure}

        \begin{subfigure}{0.24\linewidth}
    \includegraphics[width=\linewidth]{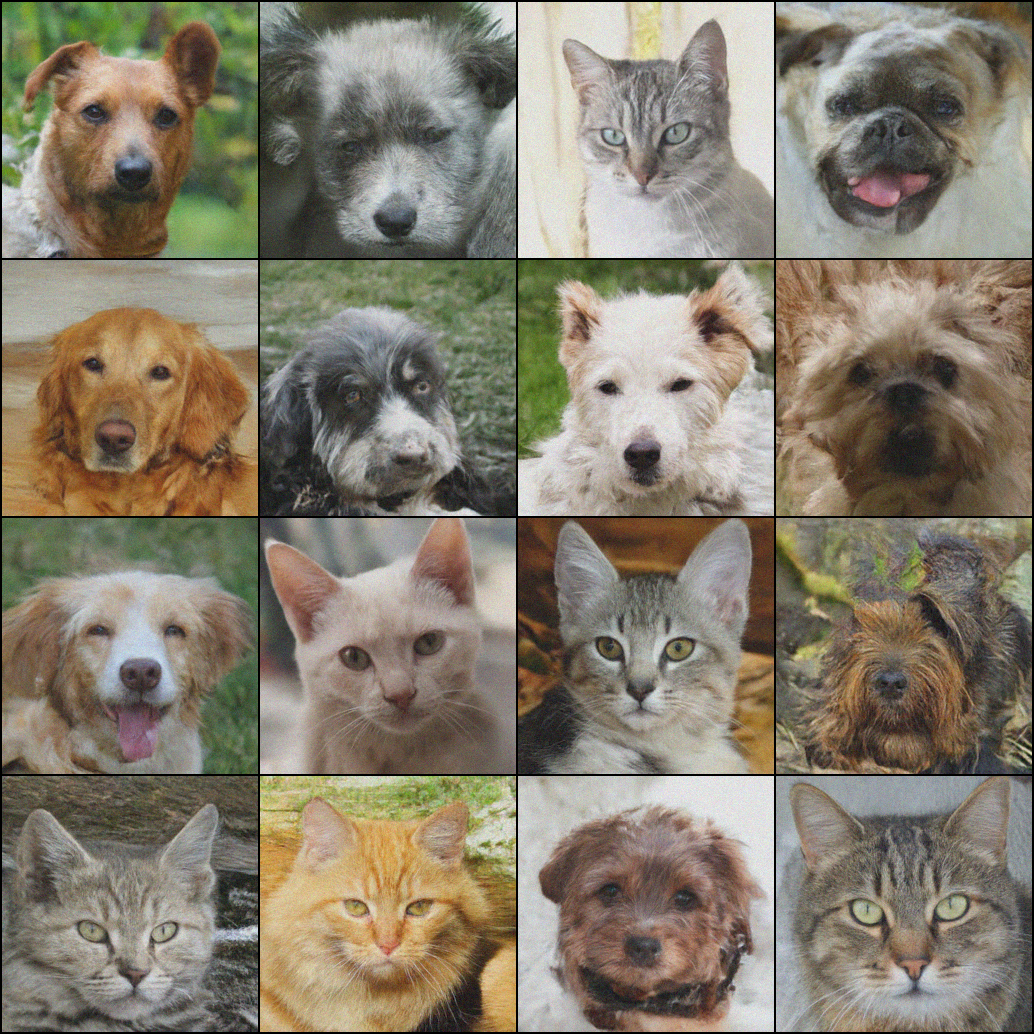}
    \caption{AFHQ, groundtruth}
  \end{subfigure}
  \begin{subfigure}{0.24\linewidth}
    \includegraphics[width=\linewidth]{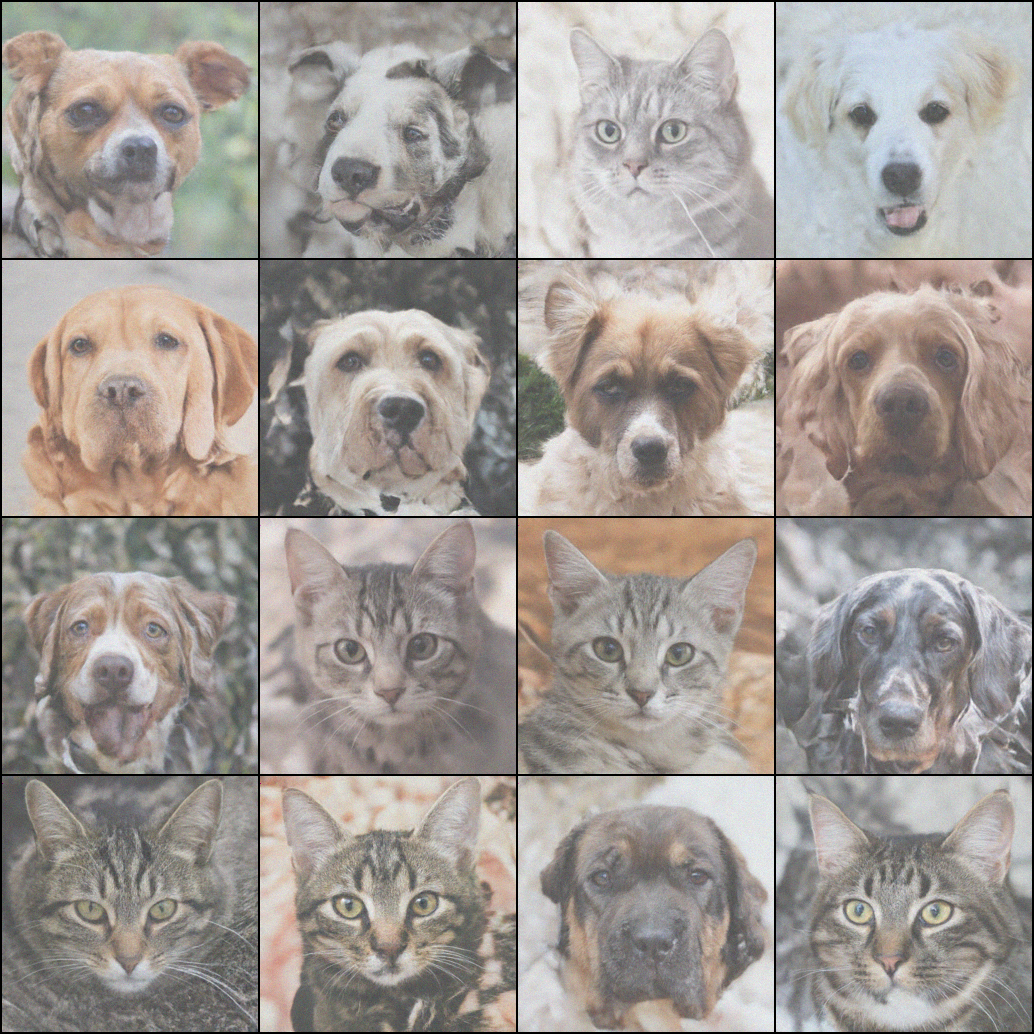}
    \caption{AFHQ, $o=1$}
  \end{subfigure}
  \begin{subfigure}{0.24\linewidth}
    \includegraphics[width=\linewidth]{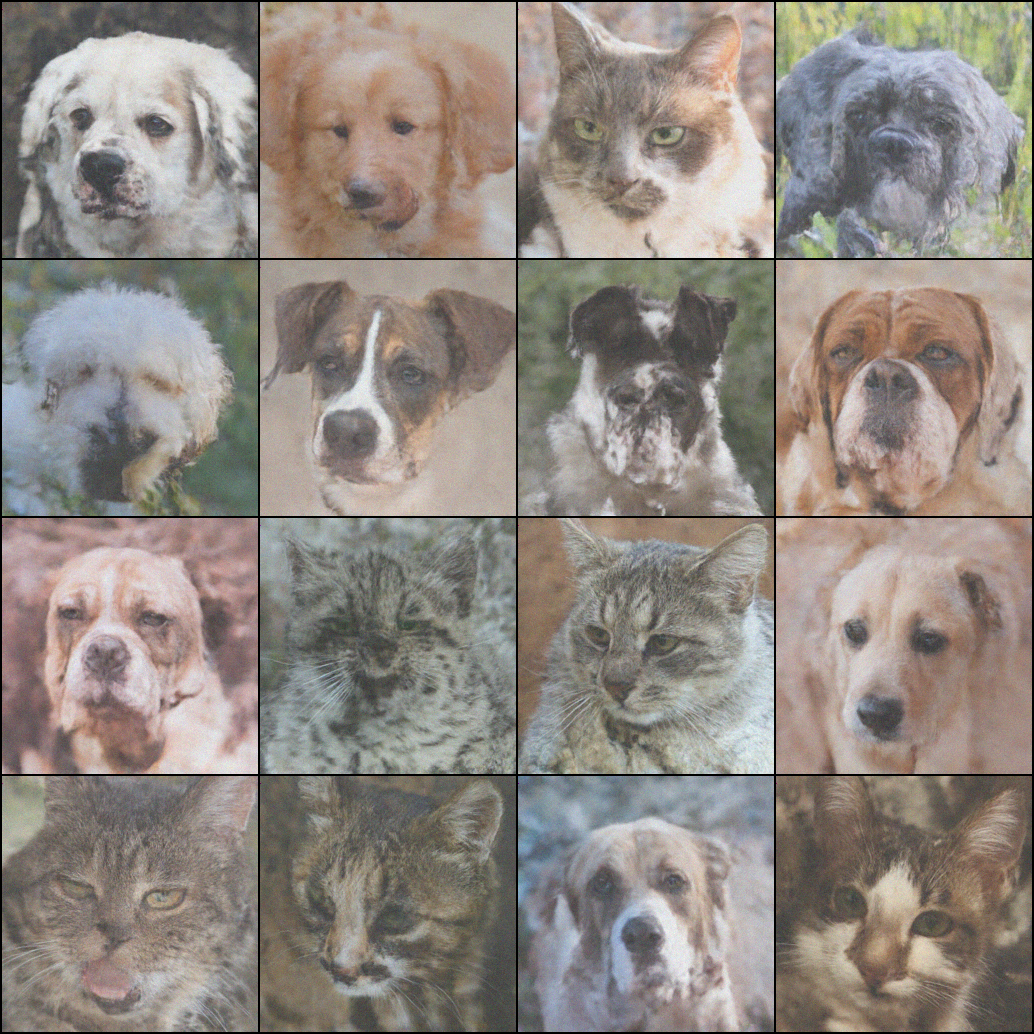}
    \caption{AFHQ, $o=2$}
  \end{subfigure}
    \begin{subfigure}{0.24\linewidth}
    \includegraphics[width=\linewidth]{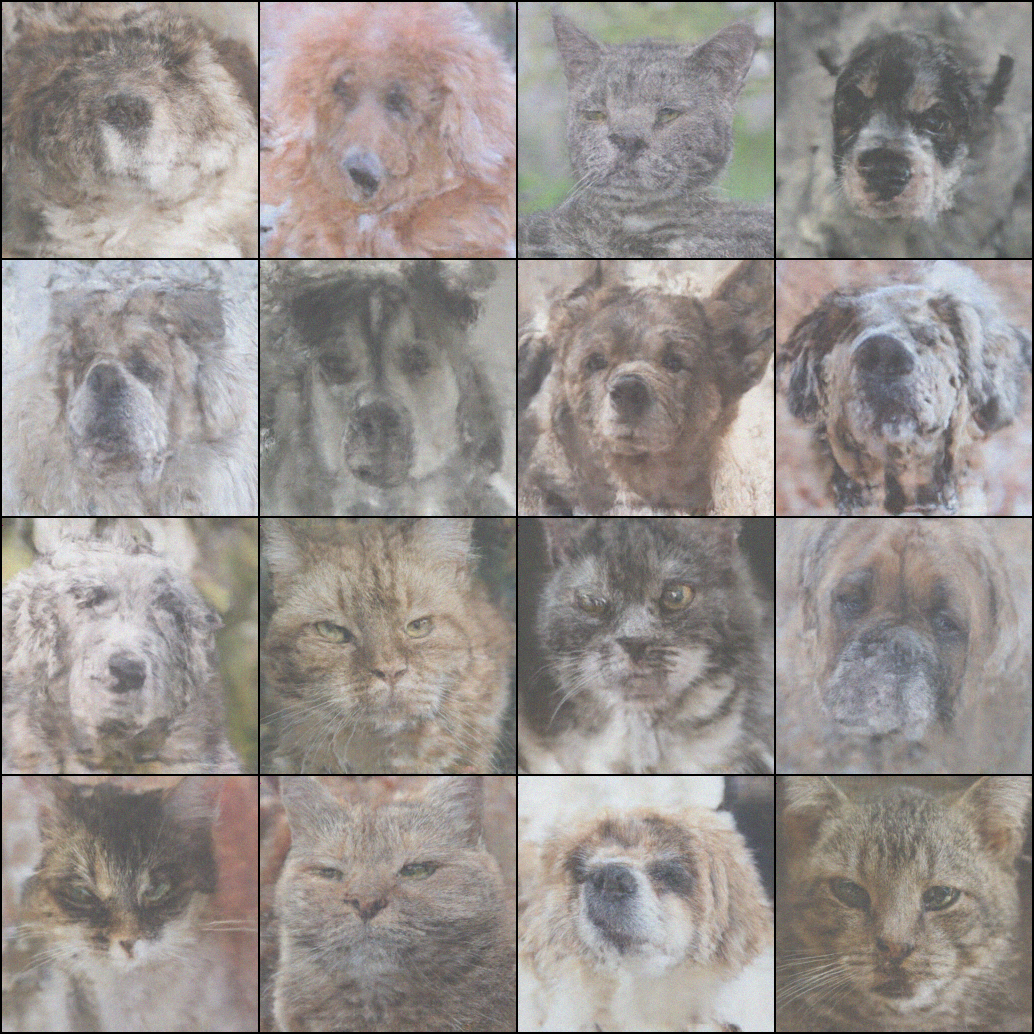}
    \caption{AFHQ, $o=5$}
    \end{subfigure}
  \caption{Generations where dependency on the nearest $o$ sub-variables is masked. It can still generate meaningful images, indicating the potential feasibility of acceleration with parallel computing.}
  \label{fig: nd}
\end{center}
\end{figure*}

\textbf{Depthwise Heterogeneity of Redundancy.} We further investigate whether the degree of sequential redundancy varies across different layers during the generation process ($\boldsymbol{z}_{K} \rightarrow \dots \rightarrow \boldsymbol{x}$). Theoretically, we expect heterogeneity: random variables $\boldsymbol{z}_{K}$ from the first layer, which performs structure initiation from a Gaussian noise, tend to exhibit high dependency on preceding sub-variables, as the pure noise input contains theoretically no information and hence relies more on context given by preceding generations $\boldsymbol{z}_{K,<l}$. 
Conversely, subsequent transformations refine informative outputs from the previous transformation $\boldsymbol{z}_{k+1,l}$, leading to weak sequential dependency regarding preceding generations $\boldsymbol{z}_{k,<l}$.

To verify this, we measure the cosine similarity and L2 distance deviation between the standard inference outputs $\boldsymbol{z}_{k}$ and those generated while masking the $o$ nearest dependencies as shown in~\eqref{eq: mask}. 
As shown in~\Cref{fig: cs}, the deviation is significantly larger for the first layer compared to subsequent ones. 
This result empirically confirms the low redundancy in the first layer, which is consistent with our expectation.
The minimal deviation in subsequent layers aligns with their refinement role, which leverages informative inputs and the existing contextual structure. 
This observation motivates exploring layer-specific optimizations for the generation process.

\subsection{Parallel Inference by Jacobi Iteration}
\label{sec: parallel-inference}

    \begin{algorithm}
       \caption{Jacobi decoding for $f_{k}$}
       \label{alg:jac}
       \textbf{Input:} Sequence $\boldsymbol{z}_{k+1}$, functions $s_{k}(\cdot)$ and $g_{k}(\cdot)$, stopping threshold $\tau$ \\
       \textbf{Output:} Sequence $\boldsymbol{z}_{k}$ \\
    Initialize $\boldsymbol{z}_{k}^{0}=\boldsymbol{0}$, $t=0$ \\
    \textbf{while} true \textbf{do}\\
    \quad $t\gets{t+1}$, $\boldsymbol{z}^{t}_{k,1}\gets{\boldsymbol{z}_{k+1,1}}$\\
    \quad \textbf{for} $l=2,\ldots L$ \\
    \quad \textbf{do in parallel} 
     \quad \quad $\boldsymbol{z}_{k,l}^t\gets{\boldsymbol{z}_{k+1,l}\odot\exp(-s_{k}(\boldsymbol{z}_{k,<l}^{t-1}))+g_{k}(\boldsymbol{z}^{t-1}_{k,<l})}$ \\
     \quad \textbf{end for} \\
     \quad \textbf{if} $\|\boldsymbol{z}_{k}^{t} - \boldsymbol{z}_{k}^{t-1} \|_\infty < \tau$ \\
     \quad \quad  
     \textbf{break}\\
     \quad \textbf{end if} \\
    \textbf{end while} \\
    $\boldsymbol{z}_{k}=\boldsymbol{z}_{k}^{t}$
    \end{algorithm}

Our empirical observations of sequential redundancy revealed a key property of inference in discrete autoregressive normalizing flow: \emph{the generation of subsequent element $\boldsymbol{z}_{k,l}$ exhibits a degree of robustness to inaccuracies in the preceding elements $\boldsymbol{z}_{k,<l}$}.
This observation suggests that the strict, fully converged sequential dependency enforced by the standard inference procedure might contain redundancies and is not strictly necessary at every step for generation quality. This finding motivates exploring parallel computation strategies that can exploit this robustness.

To enable parallelization, we first re-examine the inference task. As established, generating the target sequence $\boldsymbol{z}_{k}$ from the input $\boldsymbol{z}_{k+1}$ via the inverse transformation defined in~\eqref{eq: inverse-auto-nf} fundamentally requires finding the unique solution $\boldsymbol{z}_{k}$ that satisfies the entire set of autoregressive conditional dependencies. It can be formally viewed as solving a system of $L$ non-linear equations $\mathcal{F}_{l}$, implicitly defined for $l=1,\ldots,L$ as
\begin{equation}
    \mathcal{F}_{l}(\boldsymbol{z}_{k,l},\boldsymbol{z}_{k,<l},\boldsymbol{z}_{k+1,l})=0,
    \label{eq: imp3_2}
\end{equation}
where $\mathcal{F}_{l}=0$ represents the condition imposed by the $k$-th step of the inverse transform, given the known input $\boldsymbol{z}_{k+1, l}$. The standard sequential inference method implicitly solves this system using a Gauss-Seidel-like approach, where the computation of $\boldsymbol{z}_{k,l}$ relies on the immediately preceding, fully computed values $\boldsymbol{z}_{k,1},\ldots,\boldsymbol{z}_{k,l-1}$.

Leveraging the potential for parallelism indicated by our observations in~\Cref{sec: motivation}, we propose employing the Jacobi decoding method to solve the system defined by~\eqref{eq: imp3_2}. Instead of sequential updates, it performs iterative, parallel updates. Starting from an initial estimate  $\boldsymbol{z}_{k}^{0}$, each iteration 
$t+1$ computes a new estimate $\boldsymbol{z}_{k}^{t+1}$ where every element $\boldsymbol{z}_{k, l}^{t+1}$ is calculated based \textit{only} on the elements from the previous iterate $\boldsymbol{z}_{k}^{t}$ and the output from previous layer $\boldsymbol{z}_{k+1}$: 
\begin{align}
\begin{aligned}
\label{eq:jacobi_update_3_2}
\text{ For } &l= 1,\dots,L, \text{solve for } \boldsymbol{z}_{k,l}^{t+1} \text{ from: } \\
&\mathcal{F}_l(\boldsymbol{z}_{k,l}^{t+1}, \boldsymbol{z}_{k,<l}^{t}, \boldsymbol{z}_{k+1,l}) = 0.
\end{aligned}
\end{align}

Because the calculation of each $\boldsymbol{z}_{k,l}^{t+1}$ within an iteration only depends on values from the completed previous iteration $\boldsymbol{z}_{k}^{t}$, all $L$ updates can be computed \textbf{concurrently}, breaking the sequential bottleneck. This iterative process continues until a suitable stopping criterion is met, such as the norm of the difference between consecutive iterates $\|\boldsymbol{z}_{k}^{t} - \boldsymbol{z}_{k}^{t-1} \|$ being sufficiently small. 
The whole process of applying Jacobi decoding in discrete autoregressive normalizing flow inference is summarized in \Cref{alg:jac}.

\begin{remark}
Jacobi decoding techniques have been explored in other generative modeling contexts, such as language models and autoregressive image generation \citep{santilli2023accelerating,song2021accelerating}. 
However, the straightforward application of Jacobi decoding has often shown limited success, e.g., marginal speedups for language models \citep{kou2024cllms} or compromised sample quality in image synthesis \citep{teng2025accelerating}.
Despite these challenging precedents, we hypothesize that discrete autoregressive normalizing flows possess characteristics that might mitigate such issues and make Jacobi iteration a more viable strategy.
The empirically observed redundant dependencies suggest that the system might tolerate the use of slightly inaccurate information $\boldsymbol{z}_{k,<l}^{t}$ from the previous iteration, which is fundamental to enabling Jacobi's parallel updates.
Moreover, the deterministic nature of inversion avoids the compounding sampling errors present in stochastic models, and operating in continuous spaces could enable smoother iterative convergence than in discrete settings. 
These points provide a rationale for incorporating Jacobi decoding for discrete autoregressive normalizing flows.
\end{remark}

\subsection{Convergence}
\label{sec: convergence}

In this section, we theoretically analyze the convergence of the proposed Jacobi iterative inference method and identify two crucial properties:
\begin{itemize}[leftmargin=*]
    \item The iteration exhibits local superlinear convergence under certain conditions, implying rapid convergence in practice.
    \item Due to the inherent triangular dependency structure of discrete autoregressive normalizing flow, the iteration is guaranteed to converge to the exact solution with no more than the number of steps of the original inference, providing a worst-case bound.
\end{itemize}

Firstly, we analyze the local convergence behavior. Assuming an appropriate initialization close to the true solution, the Jacobi iteration converges superlinearly.

\begin{figure*}[t]
    \begin{center}
        \begin{subfigure}{0.49\linewidth}
        \includegraphics[width=\linewidth]{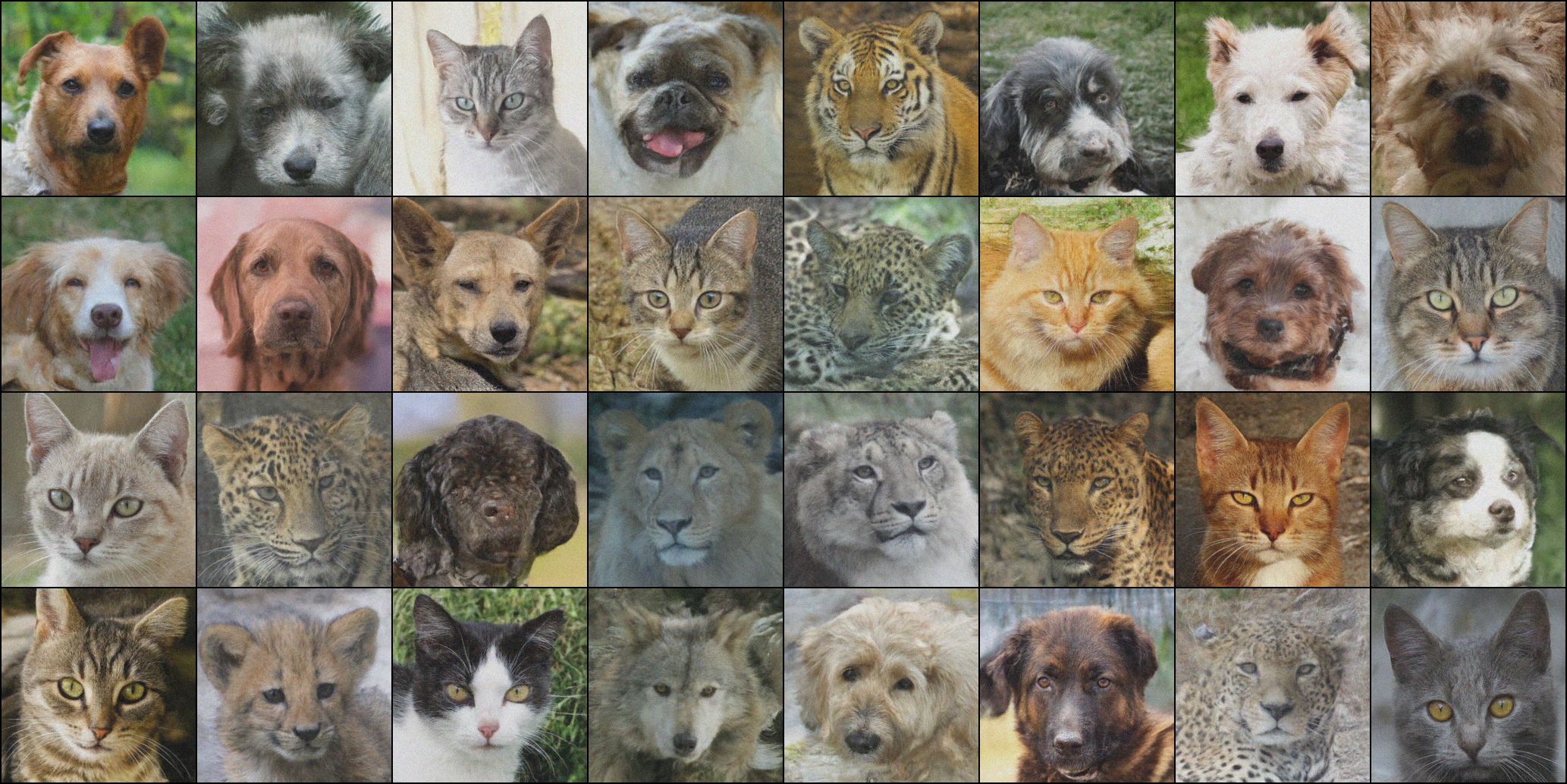}
        \caption{Sequential}
    \end{subfigure}
    \begin{subfigure}{0.49\linewidth}
        \includegraphics[width=\linewidth]{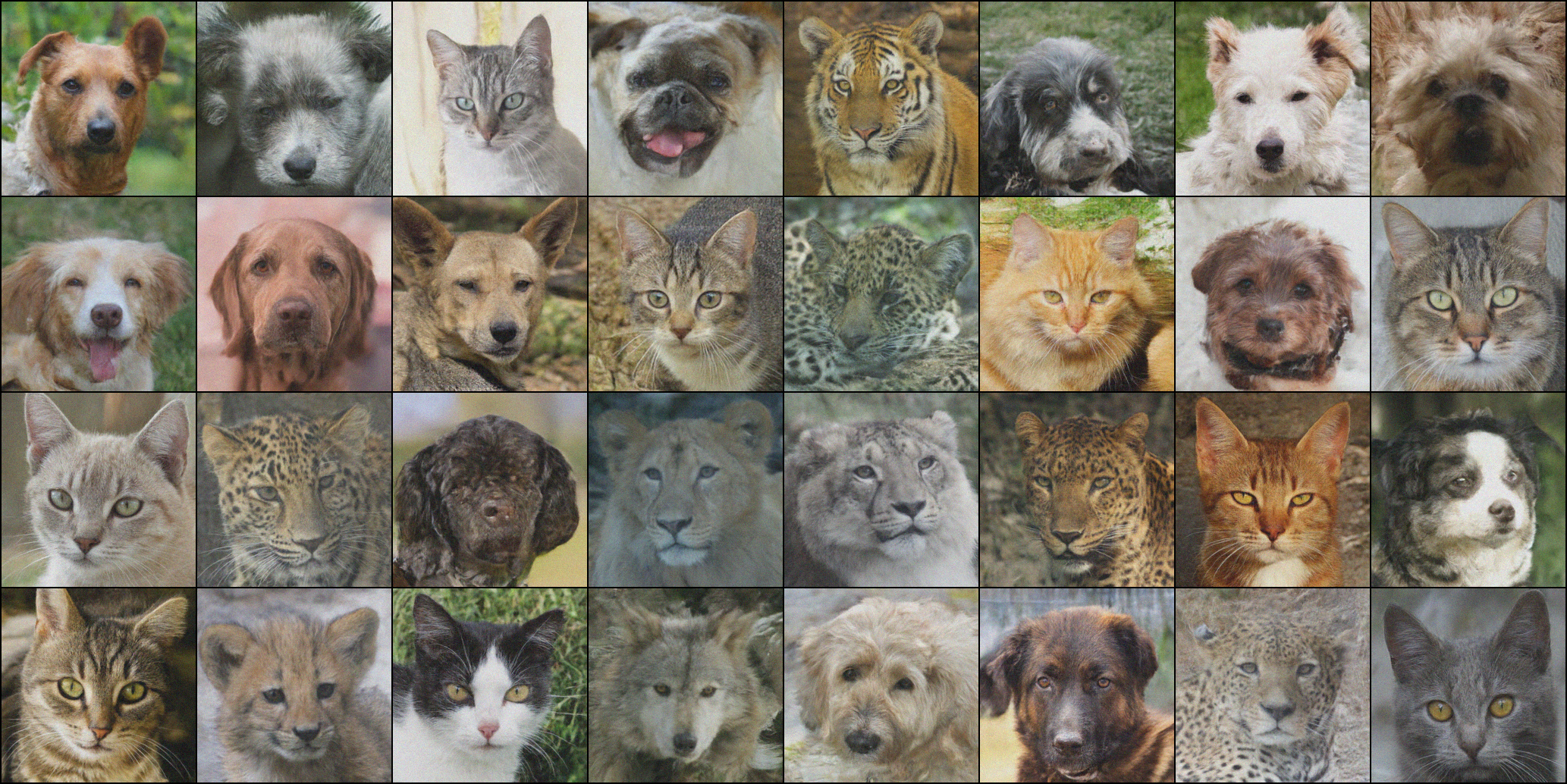}
        \caption{Ours, 4.5 times acceleration}
    \end{subfigure}
    \caption{Visualization comparison on AFHQ. Our method accelerates generation by 4.5 times while maintaining the quality and fidelity of the generated content. Additional visual comparisons on CIFAR-10 and CIFAR-100 are available in~\Cref{fig:vcc10} and~\Cref{fig:vcc100}.}
    \label{fig:vc}
    \end{center}
\end{figure*}

\begin{proposition}[Superlinear Convergence Speed]
\label{prop: superlinear}
For the iterative sequence $\boldsymbol{z}_{k}^{t}$ defined in \Cref{alg:jac}, $\exists \delta >0$, s.t. $\forall \boldsymbol{z}_{k}^0$ satisfies $||\boldsymbol{z}_{k}^0 - \boldsymbol{z}_{k}|| < \delta$, the iterative sequence $\{\boldsymbol{z}_{k}^{t}\}$ converges to $\boldsymbol{z}_{k}$, with superlinear convergence rate, i.e.,
   $\|\boldsymbol{z}_{k}^{t+1} - \boldsymbol{z}_{k}\| = o(\|\boldsymbol{z}_{k}^{t} - \boldsymbol{z}_{k}\|).$
\end{proposition}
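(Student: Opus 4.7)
The plan is to recast Algorithm~\ref{alg:jac} as a fixed-point iteration $\boldsymbol{z}^{t+1}=G(\boldsymbol{z}^{t})$ and exploit the autoregressive structure to show that the Jacobian of $G$ at the fixed point has spectral radius zero, which by classical results on contractive maps yields local superlinear convergence. First I would define $G$ so that $G(\boldsymbol{z})_{1}=\boldsymbol{z}_{k+1,1}$ and $G(\boldsymbol{z})_{l}=\boldsymbol{z}_{k+1,l}\odot\exp(-s_{k}(\boldsymbol{z}_{<l}))+g_{k}(\boldsymbol{z}_{<l})$ for $l\ge 2$, and observe that the true inverse output $\boldsymbol{z}_{k}$ is a fixed point of $G$ by Eq.~(\ref{eq: inverse-auto-nf}). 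Assuming $s_{k}$ and $g_{k}$ are $C^{1}$, which holds for standard neural parameterizations, $G$ is continuously differentiable in a neighborhood of $\boldsymbol{z}_{k}$.

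Next I would examine the block Jacobian $J:=\nabla G(\boldsymbol{z}_{k})$. Because $G_{l}$ depends only on $\boldsymbol{z}_{<l}$, the block $J_{lm}=\partial G_{l}/\partial\boldsymbol{z}_{m}$ vanishes whenever $m\ge l$. Hence $J$ is strictly (block) lower triangular, so it is nilpotent with $J^{L}=0$, and its spectral radius satisfies $\rho(J)=0$. This is precisely the structural fact that distinguishes autoregressive flows from general iterative systems and is what powers the superlinear rate.

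The core of the argument then invokes the classical fact that for any matrix with spectral radius $\rho$ and any $\epsilon>0$ there exists a vector norm $\|\cdot\|_{\epsilon}$ whose induced operator norm satisfies $\|J\|_{\epsilon}\le\rho+\epsilon$; with $\rho=0$ this yields $\|J\|_{\epsilon}\le\epsilon$ for arbitrarily small $\epsilon$. Combining this with the first-order Taylor expansion $G(\boldsymbol{z})-G(\boldsymbol{z}_{k})=J(\boldsymbol{z}-\boldsymbol{z}_{k})+R(\boldsymbol{z})$, where $\|R(\boldsymbol{z})\|_{\epsilon}=o(\|\boldsymbol{z}-\boldsymbol{z}_{k}\|_{\epsilon})$, gives $\|\boldsymbol{z}_{k}^{t+1}-\boldsymbol{z}_{k}\|_{\epsilon}\le(\epsilon+r(\|\boldsymbol{z}_{k}^{t}-\boldsymbol{z}_{k}\|_{\epsilon}))\|\boldsymbol{z}_{k}^{t}-\boldsymbol{z}_{k}\|_{\epsilon}$ with $r(\cdot)\to 0$ at the origin. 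Choosing $\delta$ small enough makes the factor in parentheses strictly below $1$, so the iterates stay in the basin and converge. Letting $t\to\infty$ shows the asymptotic rate is bounded by $\epsilon+o(1)$; because $\epsilon$ is arbitrary and all norms on a finite-dimensional space are equivalent, we conclude $\|\boldsymbol{z}_{k}^{t+1}-\boldsymbol{z}_{k}\|/\|\boldsymbol{z}_{k}^{t}-\boldsymbol{z}_{k}\|\to 0$, i.e., superlinear convergence.

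The main obstacle I expect is the bookkeeping required to reconcile the $\epsilon$-dependent norm used for contraction with the Euclidean norm in the statement, and to cleanly interleave the two limits $t\to\infty$ and $\epsilon\to 0$. The standard remedy is to first fix a single $\epsilon_{0}$ small enough to secure an initial contraction radius $\delta$ (using norm equivalence to translate back to $\|\cdot\|$), ensuring the iterates enter the basin, and only then to let $\epsilon\to 0$ in the asymptotic rate estimate applied to iterates already known to be close to $\boldsymbol{z}_{k}$. Verifying the required smoothness of $s_{k}$ and $g_{k}$ is routine given that they are realized as neural networks with smooth activations.
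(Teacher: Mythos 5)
Your proposal is correct and follows essentially the same route as the paper's proof: recast Algorithm~\ref{alg:jac} as a fixed-point iteration, observe that the autoregressive structure makes the Jacobian at the fixed point strictly lower triangular and hence nilpotent with spectral radius zero, and conclude local Q-superlinear convergence via a first-order Taylor expansion. The only difference is that you unpack the $\epsilon$-norm argument behind the classical spectral-radius criterion, whereas the paper simply cites the corresponding theorem from Ortega--Rheinboldt.
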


\Cref{sec: proofs} provides the detailed proof. Beyond the convergence rate, the specific structure of discrete autoregressive normalizing flow inference provides a strong global guarantee. The task is equivalent to solving a triangular system of equations, where the $l$-th unknown sub-variable depends only on preceding unknowns.
This structural property, also noted in \citet{santilli2023accelerating,song2021accelerating}, ensures that the Jacobi method converges to the exact solution in at most $L$ iterations for a sequence with length $L$.

\begin{proposition}[Finite Convergence Guarantee]
\label{prop: finite_medium}

For the iterative sequence $\boldsymbol{z}_{k}^{t}$ defined in \Cref{alg:jac}. Denoting the sequence length of $\boldsymbol{z}_{k}$ as $L$,
we have $\boldsymbol{z}_{k}^{L} = \boldsymbol{z}_{k}$.

\end{proposition}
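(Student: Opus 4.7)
The plan is to exploit the strictly triangular dependency structure of the inverse transformation in Eq.~(\ref{eq: inverse-auto-nf}): the update for coordinate $l$ uses only coordinates with index strictly less than $l$. Under a Jacobi sweep, this means one ``already correct'' prefix grows by exactly one coordinate per iteration. Concretely, I would prove by induction on $t$ the statement
\begin{equation}
P(t):\quad \boldsymbol{z}_{k,l}^{t} = \boldsymbol{z}_{k,l} \quad \text{for all } l \le t,
\end{equation}
and then instantiate $t=L$ to conclude $\boldsymbol{z}_{k}^{L} = \boldsymbol{z}_{k}$.

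For the base case $t=1$, Alg.~\ref{alg:jac} sets $\boldsymbol{z}_{k,1}^{1} \gets \boldsymbol{z}_{k+1,1}$, which matches the first row of Eq.~(\ref{eq: inverse-auto-nf}) exactly, so $\boldsymbol{z}_{k,1}^{1} = \boldsymbol{z}_{k,1}$. For the inductive step, assume $P(t)$ holds and consider any $l \le t+1$. If $l \le t$, then the update for $\boldsymbol{z}_{k,l}^{t+1}$ reads the entries $\boldsymbol{z}_{k,<l}^{t}$, which by $P(t)$ already coincide with the true $\boldsymbol{z}_{k,<l}$; since the Jacobi update is functionally identical to the defining equation $\mathcal{F}_{l}(\cdot)=0$ from Eq.~(\ref{eq: imp3_2}), the result matches the true $\boldsymbol{z}_{k,l}$. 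For $l = t+1$, the update again only reads indices $<l = t+1$, i.e., indices $\le t$, and the same argument applies. This establishes $P(t+1)$.

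Taking $t = L$ gives $\boldsymbol{z}_{k,l}^{L} = \boldsymbol{z}_{k,l}$ for every $l \in \{1,\dots,L\}$, which is exactly the claim. I would close with a short remark explaining why this yields a ``no worse than sequential'' guarantee: the original sequential decoder performs $L$ substitutions down the triangular system, and the Jacobi scheme reproduces the effect of each such substitution in one parallel sweep, so the two procedures terminate in the same number of rounds in the worst case while the Jacobi sweeps can be executed concurrently across the $L$ coordinates.

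The main obstacle is not technical depth but notational care: one must be precise that the Jacobi update at coordinate $l$ depends only on $\boldsymbol{z}_{k,<l}^{t}$ (strictly lower indices) and on the input $\boldsymbol{z}_{k+1,l}$, never on $\boldsymbol{z}_{k,l}^{t}$ itself or on higher indices. Once this is spelled out cleanly, the induction is immediate; the slightly subtle point is that the first coordinate is updated without any dependency at all, which is what ``seeds'' the inductive propagation and prevents the argument from stalling at the base case.
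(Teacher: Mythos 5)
Your proof is correct and follows essentially the same route as the paper's: an induction on the iteration count $t$ showing that the prefix $\boldsymbol{z}_{k,\le t}^{t}$ already equals the exact (fixed-point) solution, driven by the strictly triangular dependency of the update and seeded by the dependency-free first coordinate. The paper phrases this via an explicitly defined iteration map $F$ and its fixed point, but the decomposition into the cases $l\le t$ and $l=t+1$ and the conclusion at $t=L$ are identical to yours.
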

This finite convergence occurs because the triangular structure allows information to propagate definitively through the sequence during the updates. 
\Cref{sec: proofs} presents a formal proof.
\Cref{prop: finite_medium} thus provides a strict upper bound on the computational steps needed to reach the exact solution.

These propositions jointly establish the inherent efficiency of Jacobi decoding.
\Cref{prop: superlinear} reveals that the error reduces with a superlinear rate.
This indicates the convergence can be fast, as each iteration $t < L$ yields an increasingly substantial reduction in their respective errors \citep{10.5555/1096889}. 
\textcolor{black}{Note that though we assume a close initialization in the theoretical analysis, the convergence speed remains fast under various initializations, as shown in~\Cref{sec:exp:abl}.}
On the other hand, \Cref{prop: finite_medium} further presents the convergence to the exact solution $\boldsymbol{z}_{k}$ is guaranteed in at most $L$ iterations. 
These results confirm the potential of applying Jacobi decoding for fast inference in discrete autoregressive normalizing flows.

\begin{table*}[t]
    \begin{center}
    \setlength{\tabcolsep}{6pt}
    \caption{Comparison of sequential inference, uniform Jacobi decoding, and our approach. The subscript indicates the maximum deviation in three runs. } 
    \label{tab:whole}
    \resizebox{.8\textwidth}{!}{%
    \begin{threeparttable} 
    \begin{tabular}{ll|cc|ccc} 
    \toprule
    \multicolumn{2}{c}{Configuration} & \multicolumn{2}{c}{Generation Speed} & \multicolumn{3}{c}{Generation Quality} \\
     Dataset     & Method & Time (s) $\downarrow$ & Speed Up $\uparrow$ & FID $\downarrow$ & CLIP-IQA $\uparrow$ & BRISQUE $\uparrow$ \\ 
    \midrule
    \multirow{3}{*}{CIFAR-10} & Sequential & $9.56_{\pm0.42}$ & $1.0\times$     & $9.71_{\pm0.03}$ & $0.35_{\pm0.00}$ & $56.35_{\pm0.21}$ \\ 
                            & UJD        & $3.92_{\pm0.09}$ & $2.4\times$   & $10.19_{\pm0.03}$ & $0.35_{\pm0.00}$ & $56.79_{\pm0.20}$ \\ 
                            & \textbf{Ours} & $\mathbf{2.63}_{\pm0.13}$ & $\mathbf{3.6}\times$   & $10.20_{\pm0.03}$ & $0.35_{\pm0.00}$ & $56.78_{\pm0.19}$ \\ 
    \midrule
    \multirow{3}{*}{CIFAR-100} & Sequential & $9.57_{\pm0.27}$ & $1.0\times$     & $8.22_{\pm0.03}$ & $0.35_{\pm0.00}$ & $57.75_{\pm0.12}$ \\ 
                             & UJD        & $3.30_{\pm0.10}$ & $2.9\times$  & $8.26_{\pm0.10}$ & $0.35_{\pm0.00}$ &  $57.76_{\pm0.06}$\\ 
                             & \textbf{Ours}  & $\mathbf{2.04}_{\pm0.03}$ & $\mathbf{4.7}\times$   & $8.19_{\pm0.16}$ & $0.35_{\pm0.00}$ & $57.78_{\pm0.12}$ \\ 
    \midrule
    \multirow{3}{*}{AFHQ} & Sequential & $186.28$ & $1.0\times$     & $15.42$ & $0.63$ & $15.55$ \\ 
                          & UJD   & $219.24$     & $0.8\times$ & $15.44$ & $0.63$ & $15.44$ \\ 
                          & \textbf{Ours} & $\mathbf{41.21}$  & $\mathbf{4.5}\times$ &  $15.44$ & $0.63$ & $15.56$ \\ 
    \bottomrule
    \end{tabular}
    \end{threeparttable} 
    }
\end{center}
\end{table*}


\subsection{Where to Use Jacobi Decoding}
\label{sec: selective}

While the theoretical analysis shows promise for Jacobi iteration, its uniform application involves practical trade-offs. 
Concretely, in one iteration, we need to update all $L$ elements of the sequence simultaneously, which trades off memory for the time required per iteration in the original sequential setting, where only one sequence element is updated.
Moreover, the commonly used key-value (KV) cache for optimized attention operators in modern transformer architectures~\citep{ott2019fairseq} is not directly applicable to Jacobi decoding, as the decoded sub-variables are approximated and must be updated at each iteration.
These limitations are particularly relevant where dependencies are strong,
potentially making the uniform Jacobi decoding approach slower than optimized sequential decoding in such scenarios.

Motivated by this trade-off and the depthwise heterogeneity of sequential redundancy presented in~\Cref{sec: motivation}, we exploit that the first ($K$-th) layer often exhibits stronger dependencies and propose our selective layer processing strategy. 
On modern transformer architectures with a KV cache, it uses standard sequential decoding for the dependency-heavy first layer, where the original sequential decoding works well. 
Subsequently, it switches to the parallel Jacobi iteration for the remaining layers, where higher redundancy is expected.
It enables the benefits of parallelism while avoiding the high additional computational costs in the first layer due to stronger dependencies.
This application aims to achieve more effective overall inference acceleration by strategically applying the Jacobi decoding method to layers with higher redundancy. 
\section{Experiments}
\label{sec:exp}

In this section, we apply our approach to TarFlow, a state-of-the-art discrete autoregressive normalizing flow model \citep{zhai2024normalizing}.
Experiments are conducted on CIFAR-10 and CIFAR-100 \citep{krizhevsky2009learning} using models trained from scratch, and AFHQ \citep{choi2020stargan} with a resolution of 256×256 using the released TarFlow checkpoint. 
The compared baselines include the standard sequential inference, the default inference method for discrete autoregressive normalizing flows, and the uniform Jacobi decoding (UJD) method, in which the Jacobi decoding strategy is applied to all layers.
The default stopping threshold $\tau$ for Jacobi iterations is set as $0.5$.
More experimental details, including network architectures and hyperparameters, are available in~\Cref{sec:ed}.

Our main evaluation covers both computational efficiency and the quality of the generated samples. Generation speed is quantified by the average inference time per batch and the overall speedup relative to the sequential baseline, both measured on two L40S GPUs. For generative quality, we utilize the Fréchet Inception Distance (FID) \citep{heusel2017gans}, a widely adopted metric that measures the perceptual similarity between the distribution of generated images and the real data distribution. For perceptual quality, we report two more no-reference metrics: CLIP-IQA \citep{wang2022exploring}, which assesses quality based on alignment with CLIP embeddings, and BRISQUE \citep{mittal2012no}, a blind image quality assessor sensitive to common distortions.

\textcolor{black}{For comprehensiveness, we additionally test on smaller-scale Masked Autoregressive Flows (MAFs) \citep{NIPS2017_6c1da886} on both image generation and Boltzmann distribution approximation tasks, as detailed in~\Cref{sec:expmaf}. The results also confirm that our method achieves significant acceleration and further verify its generality.}

\subsection{Comparisons on Inference}

We evaluate our method against the standard sequential baseline and the UJD method. As illustrated in~\Cref{fig:vc} and~\Cref{sec:aer}, visual inspection of samples confirms that our method maintains high perceptual fidelity, producing outputs visually comparable to the original sequential generations. This qualitative observation is confirmed by quantitative analysis detailed in~\Cref{tab:whole}. Metrics such as FID, CLIP-IQA, and BRISQUE indicate that generative quality is largely preserved with both UJD and our method across all scenarios, with only small degradation relative to the sequential baseline. 

However, while UJD demonstrates acceleration benefits on the smaller CIFAR-10 and CIFAR-100 datasets, it fails on the larger AFHQ dataset, resulting in inference speed slower than the sequential baseline. This poorer performance on AFHQ is likely attributable to higher per-iteration computational costs, combined with potentially stronger dependencies that negate parallel gains. Conversely, our method consistently achieves substantial speedups across all datasets by selectively applying Jacobi iterations. 
Notably, our method achieves up to a 4.7-times acceleration compared to the sequential baseline, as shown in~\Cref{tab:whole}, highlighting its effectiveness. This confirms that the proposed selective approach is crucial for achieving effective and generalizable acceleration without sacrificing generation quality.

\subsection{Verification of Analysis}
\begin{wrapfigure}[18]{r}{0.6\linewidth}
  \begin{center}
      \begin{subfigure}{0.49\linewidth}
    \includegraphics[width=\linewidth]{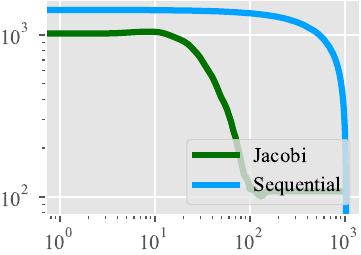}
    \caption{Layer 1}
  \end{subfigure}
        \begin{subfigure}{0.49\linewidth}
    \includegraphics[width=\linewidth]{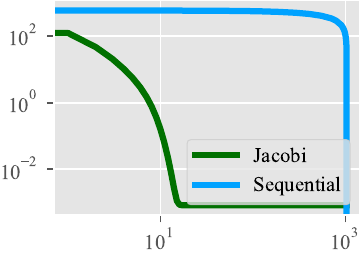}
    \caption{Layer 2}
  \end{subfigure}
  \caption{Convergence dynamics of Jacobi decoding across the first two network layers. Full results for all layers are shown in~\Cref{fig:vaf}. The plot shows the variation in the error (measured by the $\ell_2$ norm of the difference between the current iterate and the sequential output) over iterations, demonstrating fast overall convergence and the notably slower convergence of the first layer.}
  \label{fig:va}
\end{center}
\end{wrapfigure}

To experimentally validate our theoretical analysis and empirical observations, we analyze the convergence dynamics of the Jacobi iterations. \Cref{fig:va} plots the error measured as the $\ell_2$ norm of the difference between the iterate $\boldsymbol{z}_k^t$ and the ground truth $\boldsymbol{z}_k$ from sequential inference during the iteration process across different layers on the AFHQ dataset. As a reference, we also present the error variation of the original sequential inference, where the un-inferred sub-variables are regarded as the input sub-variables $\boldsymbol{z}^{t+1}$ according to the default implementation for calculation. The results clearly show a rapid decrease in error for the Jacobi process, often reaching nearly zero error in substantially fewer iterations than the theoretical worst-case bound $L$, providing empirical support for its fast-convergence properties. Furthermore, \Cref{fig:va} reveals distinct convergence behavior across layers. The error associated with the first layer decreases noticeably more slowly via Jacobi iterations than that of subsequent layers. 
This directly validates our observation in~\Cref{sec: motivation} of stronger dependencies in the initial layer and empirically confirms the rationale behind the selective strategy, which applies parallel iterations to the faster-converging later layers.
These results confirm the layer-wise differences in dependencies, thereby verifying the effectiveness of our method.

\subsection{Ablation Study}\label{sec:exp:abl}
\begin{figure*}[t]
  \begin{center}
      \begin{subfigure}{0.32\linewidth}
    \includegraphics[width=\linewidth]{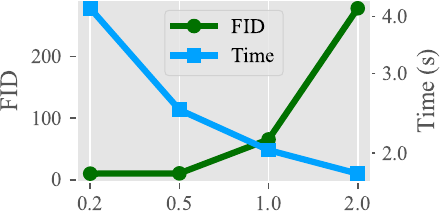}
    \caption{CIFAR-10}
  \end{subfigure}
      \begin{subfigure}{0.32\linewidth}
    \includegraphics[width=\linewidth]{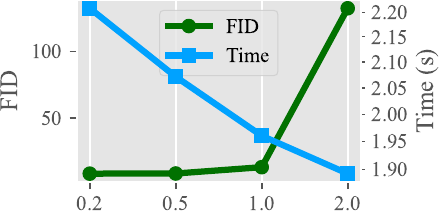}
    \caption{CIFAR-100}
  \end{subfigure}
        \begin{subfigure}{0.32\linewidth}
    \includegraphics[width=\linewidth]{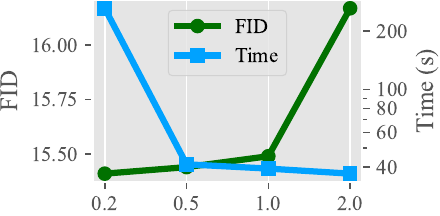}
    \caption{AFHQ}

  \end{subfigure}
  \caption{Ablation study on the stopping threshold $\tau$: FID scores and inference times for our method across different $\tau$ values, illustrating the speed-quality trade-off.}
  \label{fig:as}
    \end{center}
\end{figure*}
\textbf{Influence of $\tau$.}
To further understand the impact of the stopping threshold hyperparameter $\tau$, we perform an ablation study. 
We vary the value of $\tau$ and measure the resulting generative quality by FID and inference time. 
The results, illustrating the trade-off between these two metrics, are presented in \Cref{fig:as}. As expected, increasing the threshold $\tau$ allows parallel iterations to terminate earlier, thereby significantly reducing the overall inference time. 
However, allowing larger differences between consecutive iterates before stopping can lead to a less precise generation. 
This is reflected in the FID scores, which tend to increase as $\tau$ becomes larger. 
Notably, the results show that for values $\tau$ below 1.0, the increase in FID is relatively gradual, while the reduction in inference time remains substantial.
This supports that, with an appropriately chosen $\tau$, our method effectively increases generation speed with only a minor impact on generation quality.
$\tau=0.5$ consistently provides a favorable balance, achieving considerable acceleration while maintaining generative quality close to the baseline. 
Therefore, we adopt $\tau=0.5$ as the default setting for all other experiments presented in this paper.

\begin{wrapfigure}[13]{r}{0.6\linewidth}
    \centering
    \includegraphics[width=0.9\linewidth]{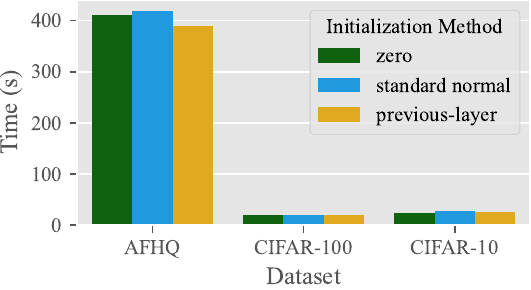}
    \caption{Ablation study on different initializations.}
    \label{fig:ic}
\end{wrapfigure}

\textbf{Influence of initialization.}
We perform an ablation study with various initialization methods of $\boldsymbol{z}_k^0$, including zero initialization $\boldsymbol{z}_k^0=\mathbf{0}$, standard normal initialization $\boldsymbol{z}_k^0 \sim \mathcal{N}(\mathbf{0}, I)$, and initialization with the output of previous layer $\boldsymbol{z}_k^0=\boldsymbol{z}_{k+1}$. Experimental results in \Cref{fig:ic} show that the acceleration performance remains similar across various initializations, supporting our analysis of superlinear convergence speed as general and insensitive to specific initialization methods. 
\section{Conclusion}
\label{sec: conclusion}

In this paper, we first observed the dependency redundancy within the autoregressive normalizing flow model and its significant variation across different layers. Based on this observation, we propose selectively applying the parallel Jacobi decoding method to layers with high dependency redundancy to accelerate inference. Theoretical analysis demonstrated the superlinear convergence of the proposed approach and provided a worst-case guarantee on the total number of required iterations. Comprehensive experiments verified the correctness of the theoretical analysis and demonstrated that our method achieves significant inference acceleration across multiple scenarios, enhancing the practical value of normalizing flow models. 





\bibliographystyle{tmlr}
\bibliography{references/main}

@STRING{cvpr = {CVPR}}

@STRING{eccv = {ECCV}}

@STRING{icpr = {ICPR}}

@STRING{aaai = {AAAI}}

@STRING{nips = {NeurIPS}}

@STRING{icml = {ICML}}

@STRING{iclr = {ICLR}}

@STRING{acl = {ACL}}

@STRING{iclr.ws = {ICLR Workshop}}

@STRING{iccv.ws = {ICCV Workshop}}

@inproceedings{LaurentNICE,
  author={Laurent Dinh and David Krueger and Yoshua Bengio},
  title={{NICE: Non-linear Independent Components Estimation}},
  year={2015},
  booktitle=iclr.ws,
}

@inproceedings{dinh2017density,
  title={{Density Estimation using Real NVP}},
  author={Dinh, Laurent and Sohl-Dickstein, Jascha and Bengio, Samy},
  booktitle=iclr,
  year={2017}
}

@inproceedings{NEURIPS2018_d139db6a,
 author = {Kingma, Durk P and Dhariwal, Prafulla},
 booktitle = nips,
 title = {{Glow: Generative Flow with Invertible 1x1 Convolutions}},
 year = {2018}
}

@inproceedings{NIPS2016_ddeebdee,
 author = {Kingma, Durk P and Salimans, Tim and Jozefowicz, Rafal and Chen, Xi and Sutskever, Ilya and Welling, Max},
 booktitle = nips,
 title = {{Improved Variational Inference with Inverse Autoregressive Flow}},
 year = {2016}
}

@inproceedings{NIPS2017_6c1da886,
 author = {Papamakarios, George and Pavlakou, Theo and Murray, Iain},
 booktitle = nips,
 title = {{Masked Autoregressive Flow for Density Estimation}},
 year = {2017}
}

@inproceedings{zhai2024normalizing,
  title={{Normalizing Flows are Capable Generative Models}},
  author={Zhai, Shuangfei and Zhang, Ruixiang and Nakkiran, Preetum and Berthelot, David and Gu, Jiatao and Zheng, Huangjie and Chen, Tianrong and Bautista, Miguel {\'A}ngel and Jaitly, Navdeep and Susskind, Joshua M},
  booktitle=icml,
  year = {2025}
}

@inproceedings{ho2019flow++,
  title={{Flow++: Improving flow-based generative models with variational dequantization and architecture design}},
  author={Ho, Jonathan and Chen, Xi and Srinivas, Aravind and Duan, Yan and Abbeel, Pieter},
  booktitle=icml,
  year={2019},
}

@article{papamakarios2021normalizing,
  title={{Normalizing Flows for Probabilistic Modeling and Inference}},
  author={Papamakarios, George and Nalisnick, Eric and Rezende, Danilo Jimenez and Mohamed, Shakir and Lakshminarayanan, Balaji},
  journal={Journal of Machine Learning Research},
  volume={22},
  number={57},
  pages={1--64},
  year={2021}
}

@book{ortega2000iterative,
  title={{Iterative Solution of Nonlinear Equations in Several Variables}},
  author={Ortega, James M and Rheinboldt, Werner C},
  year={2000},
  publisher={SIAM}
}

@inproceedings{song2021accelerating,
  title={{Accelerating Feedforward Computation via Parallel Nonlinear Equation Solving}},
  author={Song, Yang and Meng, Chenlin and Liao, Renjie and Ermon, Stefano},
  booktitle=icml,
  year={2021},
}

@inproceedings{santilli2023accelerating,
  title={{Accelerating Transformer Inference for Translation via Parallel Decoding}},
  author={Santilli, Andrea and Severino, Silvio and Postolache, Emilian and Maiorca, Valentino and Mancusi, Michele and Marin, Riccardo and Rodola, Emanuele},
  booktitle=acl,
  year={2023}
}

@inproceedings{kou2024cllms,
  title={{CLLMs: Consistency Large Language Models}},
  author={Kou, Siqi and Hu, Lanxiang and He, Zhezhi and Deng, Zhijie and Zhang, Hao},
  booktitle=icml,
  year={2024}
}

@inproceedings{
teng2025accelerating,
title={{Accelerating Auto-regressive Text-to-Image Generation with Training-free Speculative Jacobi Decoding}},
author={Yao Teng and Han Shi and Xian Liu and Xuefei Ning and Guohao Dai and Yu Wang and Zhenguo Li and Xihui Liu},
booktitle=iclr,
year={2025},
}

@article{mittal2012no,
  title={{No-Reference Image Quality Assessment in the Spatial Domain}},
  author={Mittal, Anish and Moorthy, Anush Krishna and Bovik, Alan Conrad},
  journal={IEEE Transactions on image processing},
  volume={21},
  number={12},
  pages={4695--4708},
  year={2012},
  publisher={IEEE}
}

@inproceedings{wang2022exploring,
    author = {Wang, Jianyi and Chan, Kelvin CK and Loy, Chen Change},
    title = {{Exploring CLIP for Assessing the Look and Feel of Images}},
    booktitle = {AAAI},
    year = {2023}
}

@inproceedings{heusel2017gans,
  title={{GANs Trained by a Two Time-Scale Update Rule Converge to a Local Nash Equilibrium}},
  author={Heusel, Martin and Ramsauer, Hubert and Unterthiner, Thomas and Nessler, Bernhard and Hochreiter, Sepp},
  booktitle=nips,
  volume={30},
  year={2017}
}

@techreport{krizhevsky2009learning,
  title={{Learning Multiple Layers of Features from Tiny Images}},
  institution={{University of Toronto}},
  author={Krizhevsky, Alex},
  year={2009}
}

@inproceedings{choi2020stargan,
  title={{StarGAN v2: Diverse Image Synthesis for Multiple Domains}},
  author={Choi, Yunjey and Uh, Youngjung and Yoo, Jaejun and Ha, Jung-Woo},
  booktitle=cvpr,
  year={2020}
}

@inproceedings{ott2019fairseq,
  title={{FAIRSEQ: A Fast, Extensible Toolkit for Sequence Modeling}},
  author={Ott, Myle and Edunov, Sergey and Baevski, Alexei and Fan, Angela and Gross, Sam and Ng, Nathan and Grangier, David and Auli, Michael},
  booktitle={NAACL},
  year={2019},
}

@inproceedings{songdenoising,
  title={{Denoising Diffusion Implicit Models}},
  author={Song, Jiaming and Meng, Chenlin and Ermon, Stefano},
  booktitle=iclr,
  year={2021},
}

@article{lu2022dpm,
  title={{DPM-Solver: A Fast ODE Solver for Diffusion Probabilistic Model Sampling in Around 10 Steps}},
  author={Lu, Cheng and Zhou, Yuhao and Bao, Fan and Chen, Jianfei and Li, Chongxuan and Zhu, Jun},
  journal=nips,
  year={2022}
}

@inproceedings{kumar2023coronetgan,
  title={{CoroNetGAN: Controlled Pruning of GANs via Hypernetworks}},
  author={Kumar, Aman and Anand, Khushboo and Mandloi, Shubham and Mishra, Ashutosh and Thakur, Avinash and Kasera, Neeraj and Prathosh, AP},
  booktitle=iccv.ws,
  year={2023},
}

@inproceedings{andreev2022quantization,
  title={{Quantization of Generative Adversarial Networks for
Efficient Inference: A Methodological Study}},
  author={Andreev, Pavel and Fritzler, Alexander},
  booktitle=icpr,
  year={2022},
}

@inproceedings{yeo2024nickel,
  title={{Nickel and Diming Your GAN: A Dual-Method Approach to Enhancing GAN Efficiency via Knowledge Distillation}},
  author={Yeo, Sangyeop and Jang, Yoojin and Yoo, Jaejun},
  booktitle=eccv,
  year={2024},
}

@inproceedings{
dockhorn2022genie,
title={{GENIE: Higher-Order Denoising Diffusion Solvers}},
author={Tim Dockhorn and Arash Vahdat and Karsten Kreis},
booktitle=nips,
year={2022},
}

@inproceedings{
salimans2022progressive,
title={{Progressive Distillation for Fast Sampling of Diffusion Models}},
author={Tim Salimans and Jonathan Ho},
booktitle=iclr,
year={2022}
}

@inproceedings{song2023consistency,
  title={{Consistency Models}},
  author={Song, Yang and Dhariwal, Prafulla and Chen, Mark and Sutskever, Ilya},
  booktitle=iclr,
  year={2023},
}

@article{aguinaldo2019compressing,
  title={{Compressing GANs using Knowledge Distillation}},
  author={Aguinaldo, Angeline and Chiang, Ping-Yeh and Gain, Alex and Patil, Ameya and Pearson, Kolten and Feizi, Soheil},
  journal={arXiv preprint arXiv:1902.00159},
  year={2019}
}

@book{10.5555/1096889,
author = {Dennis, J. E. and Schnabel, Robert B.},
title = {{Numerical Methods for Unconstrained Optimization and Nonlinear Equations (Classics in Applied Mathematics, 16)}},
year = {1996},
publisher = {SIAM}
}

@inproceedings{Saxena2024dynamic, title={{RG-GAN: Dynamic Regenerative Pruning for Data-Efficient Generative Adversarial Networks}}, 
booktitle=aaai, 
author={Saxena, Divya and Cao, Jiannong and Xu, Jiahao and Kulshrestha, Tarun}, 
year={2024}}

@InProceedings{pmlr-v37-rezende15,
  title = 	 {{Variational Inference with Normalizing Flows}},
  author = 	 {Rezende, Danilo and Mohamed, Shakir},
  booktitle = 	 icml,
  year = 	 {2015},
}

@article{zhong2020improving,
  title={{Improving the Speed and Quality of GAN by Adversarial Training}},
  author={Zhong, Jiachen and Liu, Xuanqing and Hsieh, Cho-Jui},
  journal={arXiv preprint arXiv:2008.03364},
  year={2020}
}

\appendix

\appendix
\renewcommand{\thetable}{A\arabic{table}}
\renewcommand{\thefigure}{A\arabic{figure}}
\renewcommand{\thetable}{A\arabic{table}}
\setcounter{table}{0}   
\setcounter{figure}{0}  


\section{Notation}
\label[appendix]{sec: notation}

To promote readability, we present a table of notations below.
\begin{table}[!ht]
    \caption{Table of notations.}
    \centering
    \begin{tabular}{@{}ll@{}}
        \toprule
        \textbf{Notation}   &   \textbf{Description}                                                            \\
        \midrule
        $\bm{x}$            &   Vector of observed data.                                                        \\
        $\bm{z}_{k}$        &   Vector of $k$-th step random variable of a discrete normalizing flow.           \\
        $\bm{z}_{k,l}$      &   $l$-th sub-variable in the split of $k$-th step random variable.                \\
        $\bm{z}_{k,l}^{t}$  &   $l$-th sub-variable in the $k$-th step random variable at $t$-th decoding step. \\
        $\rmJ_{f}$          &   Jacobian matrix of function $f$                                                 \\
        $T$                 &   Number of decoding steps in Jacobi decoding.                                    \\
        $L$                 &   Number of sub-variables in the split random variable.                           \\
        $K$                 &   Number of steps in a discrete normalizing flow.                                 \\
        \bottomrule
    \end{tabular}
    \label{tab: notation}
\end{table}

\section{Theoretical Proofs}
\label[appendix]{sec: proofs}

\setcounter{proposition}{0}
We first formally redefine our Jacobi iteration map as the function $F(\cdot)$. 

\begin{definition}[Jacobi Iteration Map]
\label{def:def}
Let $\boldsymbol{z}_{k+1}$ be a given vector sequence of length $L$, and let $s_{k}, g_{k}$ be given functions. The iteration map $F$ is defined component-wise for $\boldsymbol{z}$ as:
\begin{equation}
F(\boldsymbol{z})_{l} =
\begin{cases} 
  \boldsymbol{z}_{k+1,1} & l = 1 \\
  \boldsymbol{z}_{k+1,l} \odot \exp\bigl(-s_{k}(\boldsymbol{z}_{<l})\bigr) + g_k(\boldsymbol{z}_{<l}) & l=2,\ldots,L
\end{cases}
\end{equation}
where $\boldsymbol{z}_{<l} \coloneqq [z_{1}, \dots, z_{l-1}]^T$. The iterative sequence is generated by $\boldsymbol{z}_{k}^{t+1} = F(\boldsymbol{z}_{k}^{t})$ with initial $\boldsymbol{z}^{0}_{k}$.
\end{definition}

It is easy to observe that there exists a fixed point $\boldsymbol{z}_{k}^{\ast}$ for this iteration. For example, the output $\boldsymbol{z}_{k}$ from the sequential decoding approach in~\eqref{eq: inverse-auto-nf} is a fixed point. 
Moreover, as we use neural networks as parameterized functions $s_{k}$ and $g_{k}$, the map $F$ is continuously differentiable.
Under these observations, we have the iterative sequence $\boldsymbol{z}_{k}^{t}$ converges to $\boldsymbol{z}_{k}^{\ast}$ with at least a superlinear convergence rate starting from a close initial sequence, as shown in the following theorem.

\begin{proposition}[Superlinear Convergence Rate]
There exists $\delta>0$ such that if $\left\|\boldsymbol{z}_{k}^{0} - \boldsymbol{z}_{k}^{\ast}\right\| < \delta$, the iterative sequence $\{\boldsymbol{z}_{k}^{t}\}$ converges to $\boldsymbol{z}_{k}^{\ast}$ 
with at least a superlinear convergence rate. This means that the error satisfies:
\begin{equation}
    \|\boldsymbol{z}_{k}^{t+1} - \boldsymbol{z}_{k}^{\ast}\| = o(\|\boldsymbol{z}_{k}^{t} - \boldsymbol{z}_{k}^{\ast}\|) \quad \text{as } \boldsymbol{z}_{k}^{t} \to \boldsymbol{z}_{k}^{\ast}.
\end{equation}
\end{proposition}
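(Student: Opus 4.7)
The plan is to exploit the causal structure of the map $F$ from Definition~\ref{def:def} to show that its Jacobian at the fixed point $\boldsymbol{z}^{\ast}_k$ is nilpotent, and then invoke standard fixed-point contraction theory to obtain the superlinear rate. First, I would differentiate $F$ explicitly. Since $F(\boldsymbol{z})_1 = \boldsymbol{z}_{k+1,1}$ is independent of $\boldsymbol{z}$ and every subsequent component $F(\boldsymbol{z})_l$ for $l \ge 2$ depends on $\boldsymbol{z}$ only through the preceding entries $\boldsymbol{z}_{<l}$ via the networks $s_k$ and $g_k$, the partial derivative $\partial F(\boldsymbol{z})_l / \partial \boldsymbol{z}_j$ vanishes whenever $j \ge l$. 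Hence the Jacobian $A \coloneqq F'(\boldsymbol{z}^{\ast}_k)$ is block strictly lower triangular of order $L$, so it is nilpotent with $A^L = 0$ and its spectral radius satisfies $\rho(A) = 0$.

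With $\rho(A) = 0$ established, the standard equivalent-norm lemma from linear algebra supplies, for each $\epsilon > 0$, a norm $\|\cdot\|_{\epsilon}$ on the ambient space with $\|A\|_{\epsilon} \le \epsilon$. Combining this with the $C^1$ Taylor expansion $F(\boldsymbol{z}) - \boldsymbol{z}^{\ast}_k = A(\boldsymbol{z} - \boldsymbol{z}^{\ast}_k) + R(\boldsymbol{z})$, where $\|R(\boldsymbol{z})\| = o(\|\boldsymbol{z} - \boldsymbol{z}^{\ast}_k\|)$, would furnish a radius $\delta_{\epsilon}$ such that any initial iterate within distance $\delta_{\epsilon}$ of $\boldsymbol{z}^{\ast}_k$ remains trapped in a contractive neighborhood and satisfies $\|\boldsymbol{z}_{k}^{t+1} - \boldsymbol{z}^{\ast}_k\|_{\epsilon} \le 2\epsilon \|\boldsymbol{z}_{k}^{t} - \boldsymbol{z}^{\ast}_k\|_{\epsilon}$ for every $t$. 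Letting $\epsilon \to 0$ as the iterates approach $\boldsymbol{z}^{\ast}_k$ would then deliver the asymptotic $o(\cdot)$ estimate claimed in the proposition.

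The main obstacle I anticipate is the classical subtlety that a nilpotent but nonzero Jacobian does not immediately produce one-step Q-superlinear convergence in a \emph{fixed} norm: the norm-equivalence constants relating $\|\cdot\|_{\epsilon}$ to the standard norm may blow up as $\epsilon \to 0$, so one cannot directly transport the $2\epsilon$-contraction across all iterates. A cleaner route, which I would ultimately adopt, is to iterate the error recursion $\boldsymbol{e}^{t+1} = A \boldsymbol{e}^t + O(\|\boldsymbol{e}^t\|^2)$ across blocks of $L$ steps. Because $A^L = 0$, the telescoped expression yields $\|\boldsymbol{e}^{t+L}\| = O(\|\boldsymbol{e}^t\|^2)$, i.e., R-quadratic contraction every $L$ iterations, which, combined with the finite-step termination of Proposition~\ref{prop: finite_medium}, suffices to conclude $\|\boldsymbol{z}_{k}^{t+1} - \boldsymbol{z}^{\ast}_k\| = o(\|\boldsymbol{z}_{k}^{t} - \boldsymbol{z}^{\ast}_k\|)$ as $\boldsymbol{z}_{k}^{t} \to \boldsymbol{z}^{\ast}_k$.
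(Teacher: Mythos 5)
Your proposal's core coincides with the paper's: both arguments hinge on the fact that the causal structure of $F$ makes $\mathbf{J}_F(\boldsymbol{z}_{k}^{\ast})$ strictly (block) lower triangular, hence nilpotent with $\rho=0$, and both then Taylor-expand about the fixed point to get the error recursion $\boldsymbol{e}^{t+1}=A\boldsymbol{e}^{t}+o(\|\boldsymbol{e}^{t}\|)$. Where you diverge is in converting this into a rate, and your stated worry is legitimate: a nonzero nilpotent $A$ does not give a one-step contraction by an arbitrarily small factor in a \emph{fixed} norm, since the constants relating $\|\cdot\|_{\epsilon}$ to the ambient norm blow up as $\epsilon\to 0$. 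What $\rho(A)=0$ buys for free is that the R$_1$-factor $\limsup_{t}\|\boldsymbol{e}^{t}\|^{1/t}$ is zero (R-superlinear convergence), not that the successive quotients $\|\boldsymbol{e}^{t+1}\|/\|\boldsymbol{e}^{t}\|$ tend to zero; the paper's appeal to the Ortega--Rheinboldt theorem glosses over exactly this distinction.

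The gap is in your final step. Telescoping over $L$ iterations with $A^{L}=0$ does yield $\|\boldsymbol{e}^{t+L}\|=O(\|\boldsymbol{e}^{t}\|^{2})$ (granting a Lipschitz Jacobian so the remainder is $O(\|\boldsymbol{e}^{t}\|^{2})$ rather than the $o(\|\boldsymbol{e}^{t}\|)$ that mere $C^{1}$ smoothness provides), but an $L$-step quadratic contraction does not control individual step ratios: the error can stagnate for $L-1$ steps and collapse on the $L$-th. For instance, the map $(x,y)\mapsto(y^{2},\,x/2)$ has a nilpotent Jacobian at the origin and two-step quadratic contraction, yet $\|\boldsymbol{e}^{t+1}\|/\|\boldsymbol{e}^{t}\|$ equals $1/2$ for every even $t$, so the per-step $o(\cdot)$ bound fails. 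Invoking Proposition~\ref{prop: finite_medium} does not repair this, because that result makes $\boldsymbol{e}^{t}$ identically zero for $t\ge L$, rendering the asymptotic quotient statement vacuous rather than proved (an awkwardness the paper's own proof shares). To genuinely obtain $\|\boldsymbol{e}^{t+1}\|=o(\|\boldsymbol{e}^{t}\|)$ you need the Dennis--Mor\'e-type condition $\|A\boldsymbol{e}^{t}\|=o(\|\boldsymbol{e}^{t}\|)$ along the actual iterates; for this particular triangular map that is exactly what the finite-termination induction supplies (the first $t$ components of $\boldsymbol{e}^{t}$ vanish identically, and $A$ annihilates vectors supported on late components only in the limit of full support collapse), and making that alignment explicit is the missing step that would close your argument.
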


\begin{proof}
Denote $\boldsymbol{e}^{t} = \boldsymbol{z}_{k}^{t} - \boldsymbol{z}_{k}^{\ast}$ the approximation error at iteration $t$.

The structure of the iteration map $F(\boldsymbol{z})$ (as detailed in prior definitions: $F(\boldsymbol{z})_{1} = \boldsymbol{z}_{k+1,1}$ is constant, and $F(\boldsymbol{z})_{l} = F(\boldsymbol{z}_{<l})$ for $l=2,\ldots,L$) ensures that the Jacobian matrix $\mathbf{J}_F(\boldsymbol{z})$ is strictly lower triangular for any $\boldsymbol{z}$.
This implies that $\mathbf{J}_F(\boldsymbol{z}_{k}^{\ast})$ is strictly lower triangular. Consequently, all eigenvalues of $\mathbf{J}_F(\boldsymbol{z}_{k}^{\ast})$ are zero. Therefore, the spectral radius of the Jacobian at the fixed point is
\begin{equation}
    \rho(\mathbf{J}_F(\boldsymbol{z}_{k}^{\ast})) = 0.
\end{equation}

As $F$ is continuously differentiable, Taylor's theorem allows us to expand $F(\boldsymbol{z}_{k}^{t})$ around $\boldsymbol{z}_{k}^{\ast}$. For $\boldsymbol{z}_{k}^{t}$ sufficiently close to $\boldsymbol{z}_{k}^{\ast}$:
\begin{equation}
    F(\boldsymbol{z}_{k}^{t}) = F(\boldsymbol{z}_{k}^{\ast}) + \mathbf{J}_F(\boldsymbol{z}_{k}^{\ast})(\boldsymbol{z}_{k}^{t} - \boldsymbol{z}_{k}^{\ast}) + o(\left\|\boldsymbol{z}_{k}^{t} - \boldsymbol{z}_{k}^{\ast}\right\|).
\end{equation}
Using the iteration definition $\boldsymbol{z}_{k}^{t+1} = F(\boldsymbol{z}_{k}^{t})$ and the property at fixed-point $\boldsymbol{z}_{k}^{\ast} = F(\boldsymbol{z}_{k}^{\ast})$, we have:
\begin{equation}
    \boldsymbol{z}_{k}^{t+1} - \boldsymbol{z}_{k}^{\ast} = F(\boldsymbol{z}_{k}^{t}) - F(\boldsymbol{z}_{k}^{\ast}) = \mathbf{J}_F(\boldsymbol{z}_{k}^{\ast})(\boldsymbol{z}_{k}^{t} - \boldsymbol{z}_{k}^{\ast}) + o(\left\|\boldsymbol{z}_{k}^{t} - \boldsymbol{z}_{k}^{\ast}\right\|).
\end{equation}
This yields an error propagation dynamic:
\begin{equation}
    \boldsymbol{e}^{t+1} = \mathbf{J}_F(\boldsymbol{z}_{k}^{\ast}) \boldsymbol{e}^{t} + o(\|\boldsymbol{e}^{t}\|).
    \label{eq: error_propagation_final}
\end{equation}

Standard theorems on iterative methods (e.g., results related to Q-order of convergence as found in previous work \citep{ortega2000iterative}, see discussion around Theorem 10.1.4 for Q-superlinear) state that if an iteration converges. Its error satisfies the relationship in~\eqref{eq: error_propagation_final}, then the convergence is Q-superlinear if and only if $\rho(\mathbf{J}_F(\boldsymbol{z}_{k}^{\ast})) = 0$.
Meanwhile, the condition $\rho(\mathbf{J}_F(\boldsymbol{z}_{k}^{\ast})) < 1$ (which is satisfied here since $\rho=0$) ensures that $\boldsymbol{z}_{k}^{\ast}$ is a point of attraction, so for $\boldsymbol{z}_{k}^{0}$ sufficiently close to $\boldsymbol{z}_{k}^{\ast}$, the sequence $\{\boldsymbol{z}_{k}^{t}\}$ is guaranteed to converge at $\boldsymbol{z}_{k}^{\ast}$.

Given that $\rho(\mathbf{J}_F(\boldsymbol{z}_{k}^{\ast})) = 0$, the cited convergence theory directly implies that the iteration is Q-superlinear. By definition, Q-superlinear convergence means that $\left\|\boldsymbol{e}^{t+1}\right\| = o(\left\|\boldsymbol{e}^{t}\right\|)$ as $\boldsymbol{e}^{t} \to \boldsymbol{0}$. Therefore, we conclude that:
\begin{equation}
    \|\boldsymbol{z}_{k}^{t+1} - \boldsymbol{z}_{k}^{\ast}\| = o(\left\|\boldsymbol{z}_{k}^{t} - \boldsymbol{z}_{k}^{\ast}\right\|) \quad \text{as } \boldsymbol{z}_{k}^{t} \to \boldsymbol{z}_{k}^{*}.
\end{equation}
This demonstrates at least a superlinear convergence rate.
\end{proof}

\begin{proposition}[Finite Convergence Guarantee]
\label{prop:finite_conv}

For iteration map $F$ and iterative sequence $\{\boldsymbol{z}_{k}^{t}\}_{t\ge 0}$ defined in Definition \ref{def:def}, the iteration converges to the fixed point in at most $L$ steps:
\begin{equation}
    \boldsymbol{z}_{k}^{t} = \boldsymbol{z}_{k}^{\ast} \quad \forall t\geq L. 
    \label{eq: finite_conv_result}
\end{equation}
\end{proposition}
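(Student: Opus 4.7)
The plan is to prove Proposition 2 by induction on the iteration index, exploiting the strictly lower triangular dependency structure of the iteration map $F$ already highlighted in the proof of Proposition 1. The key observation is that $F(\boldsymbol{z})_l$ depends only on $\boldsymbol{z}_{<l} = (z_1, \dots, z_{l-1})$, so any two input vectors $\boldsymbol{z}, \boldsymbol{w}$ agreeing in their first $l-1$ components produce outputs that agree in the first $l$ components. I would first state this ``one-step propagation'' fact as a lemma, since it is the only nontrivial ingredient and it follows directly from Definition~\ref{def:def}.

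Given that lemma, I would formalize the induction with the hypothesis $P(t)$: the first $t$ components of $\boldsymbol{z}_k^t$ coincide with the corresponding components of the fixed point $\boldsymbol{z}_k^{\ast}$, i.e., $\boldsymbol{z}_k^t[1{:}t] = \boldsymbol{z}_k^{\ast}[1{:}t]$. The base case $P(1)$ is immediate: by Definition~\ref{def:def} we have $F(\boldsymbol{z})_1 = \boldsymbol{z}_{k+1,1}$ regardless of $\boldsymbol{z}$, and applying the same identity at the fixed point yields $\boldsymbol{z}_k^{\ast}[1] = \boldsymbol{z}_{k+1,1}$; hence $\boldsymbol{z}_k^1[1] = \boldsymbol{z}_k^{\ast}[1]$.

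For the inductive step, assume $P(t)$ holds. By the propagation lemma, since $\boldsymbol{z}_k^t$ and $\boldsymbol{z}_k^{\ast}$ agree on the first $t$ components, for every $l \le t+1$ the value $F(\boldsymbol{z}_k^t)_l$ depends only on $\boldsymbol{z}_k^t[1{:}l-1] \subseteq \boldsymbol{z}_k^t[1{:}t]$, which equals $\boldsymbol{z}_k^{\ast}[1{:}l-1]$. Thus $\boldsymbol{z}_k^{t+1}[l] = F(\boldsymbol{z}_k^t)_l = F(\boldsymbol{z}_k^{\ast})_l = \boldsymbol{z}_k^{\ast}[l]$ for all $l \le t+1$, which gives $P(t+1)$. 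Iterating $L$ times yields $\boldsymbol{z}_k^L = \boldsymbol{z}_k^{\ast}$, and since $\boldsymbol{z}_k^{\ast}$ is a fixed point, $\boldsymbol{z}_k^t = \boldsymbol{z}_k^{\ast}$ for all $t \ge L$, establishing Eq.~(\ref{eq: finite_conv_result}).

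I do not anticipate a serious obstacle: the argument is essentially bookkeeping once the triangular structure is made explicit. The only point requiring mild care is notational consistency between the proposition (which uses $K$ as the bound) and Definition~\ref{def:def} (which uses $L$ for the sequence length); I would explicitly identify $K = L$ at the start of the proof to avoid confusion. A secondary subtlety is that the fixed point in question must be the specific one produced by sequential decoding, but uniqueness follows from the same triangular structure (each component is uniquely determined once its predecessors are), so this identification is automatic and worth noting in one sentence.
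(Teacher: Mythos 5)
Your proposal is correct and follows essentially the same route as the paper's proof: an induction on the iteration index showing that the first $t$ components of $\boldsymbol{z}_k^t$ already equal those of the fixed point, driven by the strictly triangular dependency of $F$ (which the paper uses inline rather than isolating as a lemma). Your observations about the $K$ versus $L$ notation and the uniqueness of the fixed point are apt but do not change the argument.
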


\begin{proof}
The core property is that the $l$-th component of the output, $(F(\boldsymbol{z}))_{l}$, depends only on the first $l-1$ components of the input $\boldsymbol{z}$, specifically $\boldsymbol{z}_{<l}$. We also know that $(F(\boldsymbol{z}_{k}^{0}))_{1} = \boldsymbol{z}_{k+1,1} = \boldsymbol{z}_{k,1}^{\ast}$.

We will prove by induction on the iteration step $t$ (from $t=1$ to $t=L$) that the first $t$ components of the iterate $\boldsymbol{z}_{k}^{t}$ match those of the fixed point $\boldsymbol{z}_{k}^{\ast}$. Let $P(t)$ be the statement:
\begin{equation}
    P(t): \quad \boldsymbol{z}_{k,l}^{t} = \boldsymbol{z}_{k,l}^{\ast} \quad \forall{1 \le l \le t}. 
    \label{eq:inductive_statement}
\end{equation}

It is easy to check that the statement $P(1)$ holds.
By assuming that $P(t)$ holds, that is, $\boldsymbol{z}_{k,l}^{t} = \boldsymbol{z}_{k,l}^{\ast}$ for all $1 \le l \le t$, we want to show that $P(t+1)$ holds, meaning $\boldsymbol{z}_{k,l}^{t+1} = \boldsymbol{z}_{k,l}^{\ast}$ for all $1 \le l \le t+1$.

Since $\boldsymbol{z}_{k}^{t+1} = F(\boldsymbol{z}_{k}^{t})$:
\begin{itemize}[leftmargin=*]
    \item \textbf{For $1 \le l \le t$:}
        The calculation of $\boldsymbol{z}_{k,l}^{t+1} = (F(\boldsymbol{z}_{k}^{t}))_{l}$ depends only on $\boldsymbol{z}_{k,<l}^{t}$. Given $j < l \leq t$, the inductive hypothesis $P(t)$ implies $\boldsymbol{z}_{k,j}^{t} = \boldsymbol{z}_{k,j}^{\ast}$ for these components. Thus, $\boldsymbol{z}_{k,<l}^{t} = \boldsymbol{z}_{k,<l}^{\ast}$. Because $(F(\cdot))_{l}$ only depends on these first $l-1$ components, we have
        \begin{equation}
             (F(\boldsymbol{z}_{k}^{t}))_{l} = (F(\boldsymbol{z}_{k}^{\ast}))_{l}.
        \end{equation}
        Since $\boldsymbol{z}_{k}^{\ast}$ is a fixed point hence $F(\boldsymbol{z}_{k}^{\ast}) = \boldsymbol{z}_{k}^{\ast}$, we have
        \begin{equation}
             \boldsymbol{z}_{k,l}^{t+1} = (F(\boldsymbol{z}_{k}^{\ast}))_{l} = \boldsymbol{z}_{k,l}^{\ast}.
             \label{eq: induct_k_le_i}
        \end{equation}

    \item \textbf{For $l = t+1$:}
        The calculation of $\boldsymbol{z}_{k,t+1}^{t+1} = (F(\boldsymbol{z}_{k}^{t}))_{t+1}$ depends only on $\boldsymbol{z}_{k,<t+1}^{t}$. The components in this sub-vector are $\boldsymbol{z}_{k,j}^{t}$ for $j = 1, \dots, t$. By the inductive hypothesis $P(t)$, these are equal to the corresponding components of $\boldsymbol{z}_{k}^{\ast}$. Therefore, $\boldsymbol{z}_{k,<t+1}^{t} = \boldsymbol{z}_{k,<t+1}^{\ast}$. Because $(F(\cdot))_{t+1}$ only depends on the first $t$ components
        \begin{equation}
             (F(\boldsymbol{z}_{k}^{t}))_{t+1} = (F(\boldsymbol{z}_{k}^{\ast}))_{t+1}
        \end{equation}
        Using the fixed-point property:
        \begin{equation}
             \boldsymbol{z}_{k,t+1}^{t+1} = (F(\boldsymbol{z}_{k}^{\ast}))_{t+1} = \boldsymbol{z}_{k,t+1}^{\ast}.
             \label{eq: induct_k_eq_iplus1}
        \end{equation}
        This indicates that the $(t+1)$-th component becomes correct at step $t+1$.
\end{itemize}
Combining~\eqref{eq: induct_k_le_i} and~\eqref{eq: induct_k_eq_iplus1}, we show that $\boldsymbol{z}_{k,l}^{t+1} = \boldsymbol{z}_{k,l}^{\ast},\forall{1 \le l \le t+1}$. Thus, $P(t+1)$ holds.

By mathematical induction, $P(t)$ holds for all $t=1, \dots, L$. In particular, $P(L)$ holds:
\begin{equation}
    \boldsymbol{z}_{k,l}^{L} = \boldsymbol{z}_{k,l}^{\ast} \quad \forall{1 \le l \le L}.
\end{equation}
This implies the entire vector is guaranteed to match the fixed point after $L$ steps:
\begin{equation}
    \boldsymbol{z}_{k}^{L} = \boldsymbol{z}_{k}^{\ast}.
\end{equation}
Assume $\boldsymbol{z}_{k}^{t} = \boldsymbol{z}_{k}^{\ast}$ for some $t \ge L$. Then in the next iteration:
\begin{equation}
    \boldsymbol{z}_{k}^{t+1} = F(\boldsymbol{z}_{k}^{t}) = F(\boldsymbol{z}_{k}^{\ast}) = \boldsymbol{z}_{k}^{\ast}
\end{equation}
For the same reason, if the sequence reaches $\boldsymbol{z}_{k}^{\ast}$ at step $L$, it remains at $\boldsymbol{z}_{k}^{\ast}$ for all subsequent steps. Therefore, it is shown that
\begin{equation}
    \boldsymbol{z}_{k}^{t} = \boldsymbol{z}_{k}^{\ast} \quad \forall{t \geq L}.
\end{equation}
\end{proof}

\section{Limitations and Future Work}
\label{app:limitation}
While our method demonstrates promising improvements in inference acceleration for autoregressive normalizing flow models, it also highlights several open problems.
First, although sequential and depthwise redundancy is commonly observed in trained models, it remains unknown whether this redundancy persists to the same extent in partially trained or undertrained models. This uncertainty might affect model effectiveness when models underfit or are in the early stages of training.
Second, this paper focuses exclusively on inference-time acceleration. The potential for leveraging our principles to optimize the training process or guide neural architecture design has not been explored.
Future work could further investigate the nature of these observed sequential and depthwise redundancies. 
Such insights might then be leveraged to guide model training strategies and inform architectural design, potentially leading to models that are inherently more efficient.

\section{Analysis on Memory Complexity}
\label[appendix]{sec:amc}
Both the original sequential inference and our method have $O(L^2)$ memory complexity. The original TarFlow requires $O(L^2)$ memory due to the attention mechanism operating on $L$-length sequences, with KV cache not adding to the asymptotic complexity. Similarly, our method maintains the same $O(L^2)$ complexity since we perform attention operations on the entire $L$-length sequence at each step, without requiring shared attention matrices across different steps.

While the asymptotic complexity is identical, the actual memory usage differs in practice. On AFHQ (batch size 16), our method uses only 5.2GB of memory compared to 7.8GB for the baseline implementation with KV cache. This difference arises because the baseline stores additional $K$ and $V$ tensors to avoid redundant computations, whereas our approach achieves acceleration through parallel processing without this storage overhead. Thus, our method demonstrates better memory efficiency despite processing $L$ inputs in parallel.

\section{Additional Experiments}
\label[appendix]{sec:aer}

\subsection{Experimental Details on TarFlow}
\label[appendix]{sec:ed}

\textbf{Model Details.} The network architectures for our main baseline models are adopted from the publicly available implementation of TarFlow\footnote{\url{https://github.com/apple/ml-tarflow}}. For experiments on the AFHQ dataset, we utilize the pre-trained checkpoint released by the TarFlow authors.
Due to computational resource constraints, training models on the ImageNet dataset according to the original TarFlow configurations was not feasible within the scope of this work.
For experiments on the CIFAR datasets, we largely follow the default settings provided by TarFlow, with a few adjustments.
These modifications are implemented to better suit our experimental objectives or to accommodate resource limitations.
{\color{black} Table~\ref{tab:config} summarizes the key configurations for each dataset.}

{\color{black}
\begin{table}[h]
\centering
\caption{Experimental configuration for each dataset.}
\label{tab:config}
\begin{tabular}{lccc}
\toprule
 & CIFAR-10 & CIFAR-100 & AFHQ \\
\midrule
Resolution & $32\times32$ & $32\times32$ & $256\times256$ \\
Patch size $P$ & 2 & 2 & 8 \\
Sequence length $L$ & 256 & 256 & 1024 \\
Number of blocks $K$ & 6 & 6 & 8 \\
Layers per block & 6 & 6 & 8 \\
Hidden dimension & 256 & 256 & 768 \\
Batch size & 256 & 256 & 256 \\
\bottomrule
\end{tabular}
\end{table}
}

\textbf{Evaluation Details.}
To estimate generation speed, we compute the average time per batch across 10 distinct runs.
For Fréchet Inception Distance (FID) estimation, we compute the distance between the original dataset and a generated dataset of the same size, following the standard FID definition.
To evaluate generation quality, we use metrics such as CLIP-IQA and BRISQUE, computing the average score for each metric over the set of generated samples, matching the original dataset size.
These approaches, which involve averaging and large sample sizes, ensure stable, representative results.
The evaluation methods employed, which involve averaging results across multiple batches and utilizing extensive datasets, can ensure representative results.
It is consistent with established practices in the literature \citep{song2021accelerating,teng2025accelerating}.
For CIFAR datasets, we additionally report the maximum deviation in three runs.
The magnitude of this deviation is considerably smaller than the performance differentials observed between methods, thereby affirming the statistical significance of our comparative results and the validity of the reported enhancements.

{\color{black}
\textbf{Inference Details.}
Table~\ref{tab:jacobi_iters} reports the average number of iterations per layer during inference with our Selective Jacobi Decoding. Layer~1 uses standard sequential decoding ($L-1$ steps), while the remaining layers use Jacobi iteration. Notably, almost all Jacobi layers converge in very few iterations (typically 4--7), far below the worst-case bound of $L$, validating our theoretical analysis. The relatively higher iteration count at Layer~2 on CIFAR-10 is consistent with our observation of depthwise heterogeneity, in which layers closer to the first layer tend to exhibit stronger sequential dependencies.

\begin{table}[h]
\centering
\caption{Average number of Jacobi iterations per layer ($\tau=0.5$). Layer~1 uses sequential decoding; remaining layers use Jacobi iteration.}
\label{tab:jacobi_iters}
\begin{tabular}{lccc}
\toprule
Layer & CIFAR-10 & CIFAR-100 & AFHQ \\
\midrule
1 (Sequential) & 255 & 255 & 1023 \\
2 (Jacobi) & 53.9 & 7.5 & 6.6 \\
3 (Jacobi) & 4.9 & 4.7 & 6.0 \\
4 (Jacobi) & 4.0 & 4.0 & 5.2 \\
5 (Jacobi) & 3.0 & 4.0 & 5.0 \\
6 (Jacobi) & 6.1 & 5.2 & 5.9 \\
7 (Jacobi) & --- & --- & 4.5 \\
8 (Jacobi) & --- & --- & 4.0 \\
\bottomrule
\end{tabular}
\end{table}

Table~\ref{tab:runtime_breakdown} presents the per-layer runtime breakdown for both sequential inference and our method. In sequential inference, each layer takes approximately the same time. Under SJD, Layer~1 (sequential) dominates the total cost, while each Jacobi layer completes in a fraction of the time. This confirms that the acceleration stems from replacing expensive sequential decoding with fast-converging Jacobi iterations on layers with high redundancy.

\begin{table}[h]
\centering
\caption{Per-layer runtime breakdown comparing sequential inference and SJD. ``Other'' includes self-denoising, inter-GPU communication, and noise generation overhead, etc.}
\label{tab:runtime_breakdown}
\setlength{\tabcolsep}{4pt}
\begin{tabular}{l cc cc c}
\toprule
 & \multicolumn{2}{c}{Sequential} & \multicolumn{2}{c}{SJD (Ours)} & \\
\cmidrule(lr){2-3} \cmidrule(lr){4-5}
Layer & Time (s) & \% & Time (s) & \% & Jacobi Iters \\
\midrule
\multicolumn{6}{l}{\textit{CIFAR-10}} \\
\quad 1 (Seq) & 1.45 & 14.8\% & 1.50 & 55.7\% & 255 \\
\quad 2 (Jacobi) & 1.45 & 14.8\% & 0.69 & 25.7\% & 53.9 \\
\quad 3--6 (Jacobi) & 5.82 & 59.4\% & 0.23 & 8.6\% & 3.0--6.1 \\
\quad Other & 1.08 & 11.0\% & 0.27 & 10.0\% & --- \\
\quad \textbf{Total} & \textbf{9.79} & \textbf{100\%} & \textbf{2.69} & \textbf{100\%} & \textbf{3.6$\times$} \\
\midrule
\multicolumn{6}{l}{\textit{CIFAR-100}} \\
\quad 1 (Seq) & 1.51 & 16.5\% & 1.55 & 76.6\% & 255 \\
\quad 2 (Jacobi) & 1.50 & 16.4\% & 0.10 & 4.7\% & 7.5 \\
\quad 3--6 (Jacobi) & 6.00 & 65.6\% & 0.23 & 11.4\% & 4.0--5.2 \\
\quad Other & 0.15 & 1.6\% & 0.15 & 7.2\% & --- \\
\quad \textbf{Total} & \textbf{9.15} & \textbf{100\%} & \textbf{2.02} & \textbf{100\%} & \textbf{4.5$\times$} \\
\midrule
\multicolumn{6}{l}{\textit{AFHQ}} \\
\quad 1 (Seq) & 21.60 & 12.5\% & 21.46 & 52.0\% & 1023 \\
\quad 2 (Jacobi) & 20.85 & 12.1\% & 2.56 & 6.2\% & 6.6 \\
\quad 3--8 (Jacobi) & 125.10 & 72.4\% & 11.85 & 28.7\% & 4.0--6.0 \\
\quad Other & 5.35 & 3.1\% & 5.38 & 13.1\% & --- \\
\quad \textbf{Total} & \textbf{172.90} & \textbf{100\%} & \textbf{41.25} & \textbf{100\%} & \textbf{4.2$\times$} \\
\bottomrule
\end{tabular}
\end{table}
}

\subsection{Full Results of Fig. \ref{fig: cs} and Fig. \ref{fig:va}.}
We provide full results of Fig. \ref{fig: cs} and Fig. \ref{fig:va} in Fig. \ref{fig: csf} and Fig. \ref{fig:vaf}, respectively.
 \begin{figure}
  \centering
  \vspace{-0.1in}
        \begin{subfigure}{0.32\linewidth}
    \includegraphics[width=\linewidth]{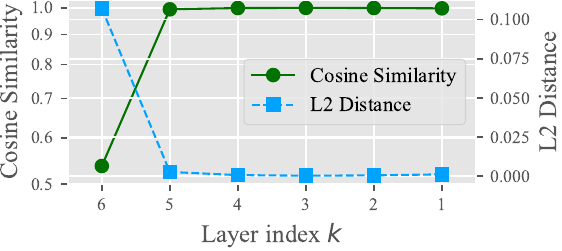}
    \caption{CIFAR-100, $o=1$}
  \end{subfigure}
  \begin{subfigure}{0.32\linewidth}
    \includegraphics[width=\linewidth]{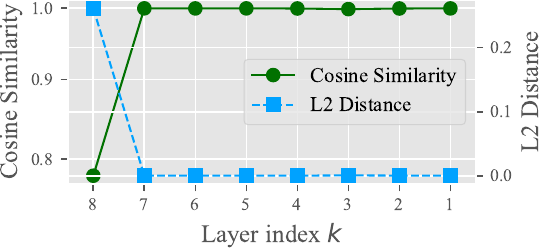}
    \caption{AFHQ, $o=1$}
  \end{subfigure}
        \begin{subfigure}{0.32\linewidth}
    \includegraphics[width=\linewidth]{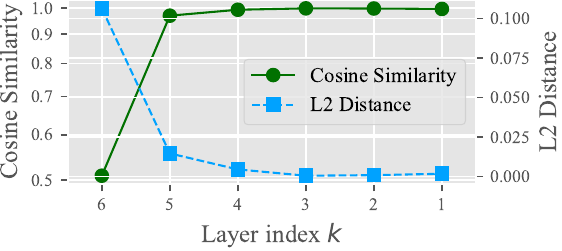}
    \caption{CIFAR-100, $o=2$}
  \end{subfigure}
  \begin{subfigure}{0.32\linewidth}
    \includegraphics[width=\linewidth]{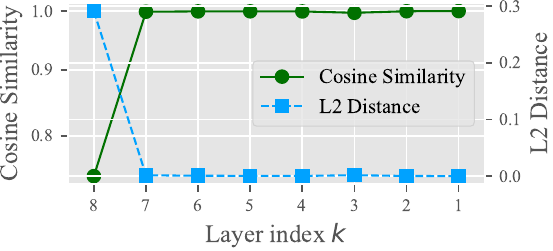}
    \caption{AFHQ, $o=2$}
  \end{subfigure}
      \begin{subfigure}{0.32\linewidth}
    \includegraphics[width=\linewidth]{figures/obs_c100_5_dual_axis.pdf}
    \caption{CIFAR-100, $o=5$}
  \end{subfigure}
  \begin{subfigure}{0.32\linewidth}
    \includegraphics[width=\linewidth]{figures/obs_afhq_5_dual_axis.pdf}
    \caption{AFHQ, $o=5$}
  \end{subfigure}
  \caption{Cosine similarities and L2 distances between layer outputs from standard inference and inference with $o=1$, $o=2$, and $o=5$ nearest preceding dependencies masked.}
  \label{fig: csf}
 \end{figure}
 
\begin{figure*}[t]
  \begin{center}
      \begin{subfigure}{0.24\linewidth}
    \includegraphics[width=\linewidth]{figures/errs_afhq_layer0.pdf}
    \caption{Layer 1}
  \end{subfigure}
        \begin{subfigure}{0.24\linewidth}
    \includegraphics[width=\linewidth]{figures/errs_afhq_layer1.pdf}
    \caption{Layer 2}
  \end{subfigure}
        \begin{subfigure}{0.24\linewidth}
    \includegraphics[width=\linewidth]{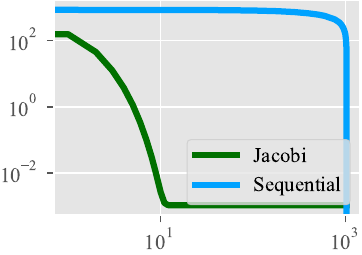}
    \caption{Layer 3}
  \end{subfigure}
        \begin{subfigure}{0.24\linewidth}
    \includegraphics[width=\linewidth]{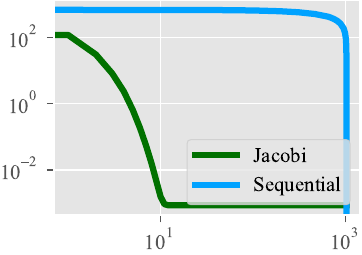}
    \caption{Layer 4}
  \end{subfigure}
        \begin{subfigure}{0.24\linewidth}
    \includegraphics[width=\linewidth]{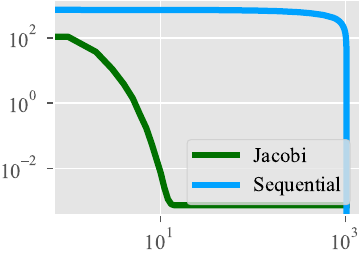}
    \caption{Layer 5}
  \end{subfigure}
          \begin{subfigure}{0.24\linewidth}
    \includegraphics[width=\linewidth]{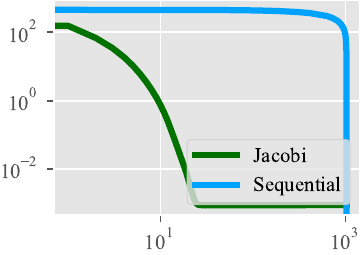}
    \caption{Layer 6}
  \end{subfigure}
          \begin{subfigure}{0.24\linewidth}
    \includegraphics[width=\linewidth]{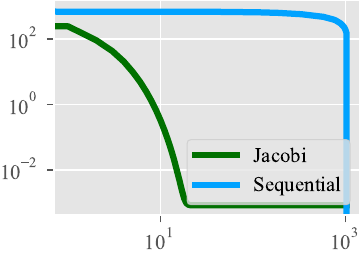}
    \caption{Layer 7}
  \end{subfigure}
            \begin{subfigure}{0.24\linewidth}
    \includegraphics[width=\linewidth]{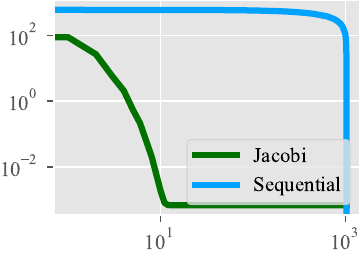}
    \caption{Layer 8}
  \end{subfigure}
  \caption{Convergence dynamics of Jacobi decoding across network layers. The plot shows the variation in the error (measured by the $\ell_2$ norm of the difference between the current iterate and the sequential output) over iterations, demonstrating fast overall convergence and the notably slower convergence of the first layer.}
  \label{fig:vaf}
\end{center}
\end{figure*}

\subsection{Experiments on Masked Autoregressive Flow}
\label[appendix]{sec:expmaf}

In this section, we test Masked Autoregressive Flow (MAF) models on both image generation and Boltzmann distribution approximation tasks. 
The implementation from \texttt{nflows}\footnote{\url{https://github.com/bayesiains/nflows/blob/master/nflows/transforms/autoregressive.py}} is used.
Note that KV-cache does not apply to this MLP-based architecture. 
Therefore, Jacobi decoding could accelerate across all layers, so we select all layers for Jacobi decoding rather than using it only on non-first layers, yielding even higher acceleration.

\textbf{Boltzmann Distribution Approximation.} We test on an 8-layer MAF trained via reverse KL to approximate the Boltzmann distribution for a high-temperature (T=3.0) disordered state of a 2D Ising model. The reverse KL divergence loss is reduced from an initial value of -1111 to a final value of -1129 over 3000 epochs. To evaluate the performance, we generated 100,000 samples and compared our method against the standard sequential method. The results demonstrate a significant acceleration with negligible impact on quality, as shown in Table \ref{tab:ising}.
\begin{table}
    \centering
        \caption{Comparison between Sequential inference and our method of MAF on Boltzmann distribution approximation task.}
    \label{tab:ising}
    \begin{tabular}{cccc}
    \toprule
        Method & Inference Time (s) & Average Energy / Site & Average Absolute Magnetization  \\ \midrule
         Sequential & 16.84 & 0.0005 & 0.0500 \\
         Ours & 1.07 & -0.0003 & 0.0498 \\ \bottomrule
    \end{tabular}
\end{table}

The near-zero energy and magnetization values are consistent with disordered-state physics, confirming that sample quality is maintained. Notably, we achieve a 15.7x speedup. This experiment provides strong evidence of the general applicability of our method.

\textbf{Image Generation.}
We conduct further experiments on an 8-layer Masked Autoregressive Flow (MAF) trained on binary MNIST.
Due to the model's limited expressive power, the generative quality is low. However, we still observe that the generations from our method and sequential inference have very similar image quality, as shown in Fig. \ref{fig:mnist}. 
To generate 100 images, our method takes only 15.24 seconds, while the original sequential method required 281.00 seconds. This is a significant 18.4x acceleration, providing strong evidence of our method's general applicability.

\begin{figure*}[t]
  \begin{center}
      \begin{subfigure}{\linewidth}
    \includegraphics[width=\linewidth]{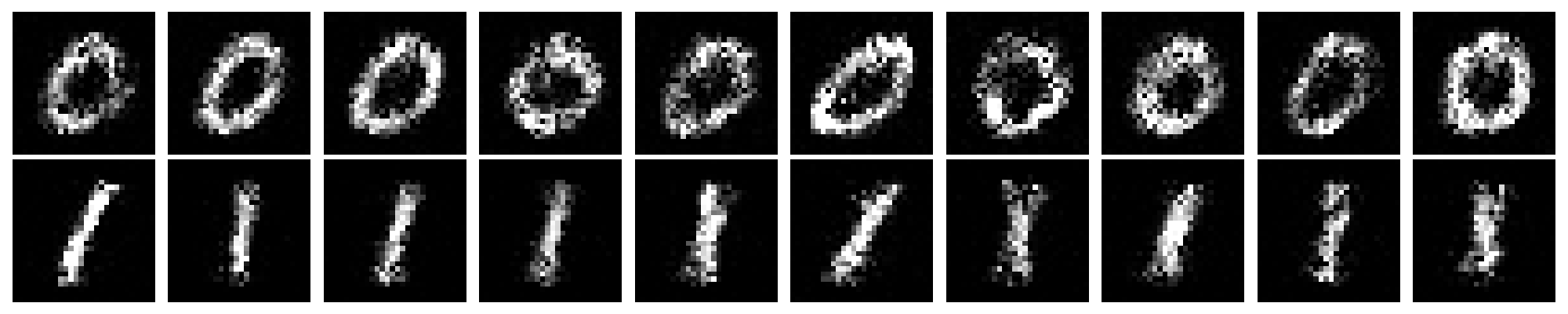}
    \caption{Sequential}
  \end{subfigure}
        \begin{subfigure}{\linewidth}
    \includegraphics[width=\linewidth]{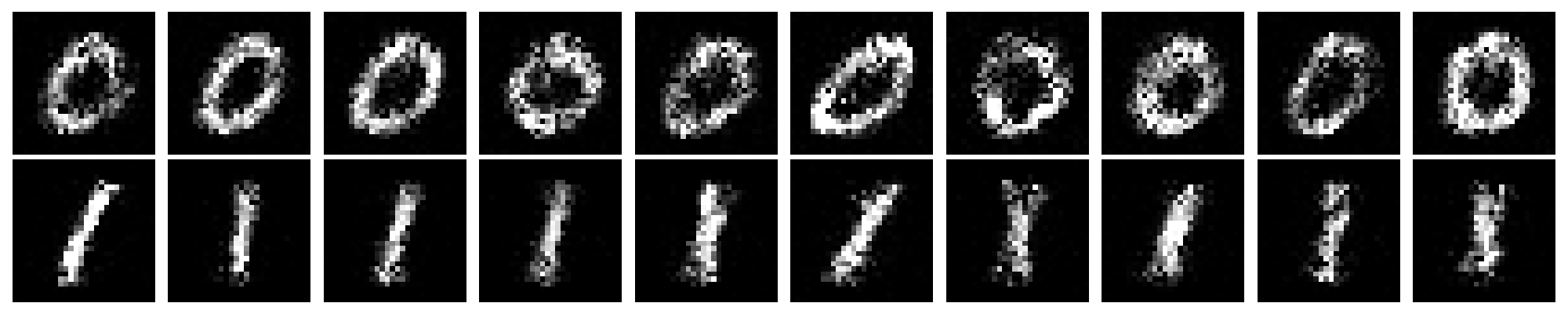}
    \caption{Ours, 18.4 times acceleration}
  \end{subfigure}
  \caption{Visualization comparison on binary MNIST.}
  \label{fig:mnist}
\end{center}
\end{figure*}

\color{black}
\subsection{Evaluation of Reconstruction Consistency}
\label[appendix]{sec:recon_consistency}

A core advantage of discrete autoregressive normalizing flows is their strictly invertible architecture, which allows for perfect reconstruction. To explicitly quantify any numerical deviation introduced by our parallel iterative approximation and to verify the preservation of the model's invertibility, we evaluate the reconstruction consistency. Specifically, we map real images $\boldsymbol{x}$ from the original dataset to the exact latent variables $\boldsymbol{z}$ using the standard sequential forward pass, and subsequently reconstruct them back to the pixel space $\hat{\boldsymbol{x}}$ using our SJD.


Our method achieves exceptionally low Mean Squared Error (MSE) scores between the original inputs $\boldsymbol{x}$ and the reconstructed outputs $\hat{\boldsymbol{x}}$: 0.00636 on CIFAR-10, 0.00313 on CIFAR-100, and 0.00122 on AFHQ (all evaluated with SJD, $\tau=0.5$). These near-zero numerical errors confirm that the deviation introduced by relaxing the strict sequential dependency is virtually negligible, and that the parallel iterations converge tightly to the exact sequential solutions. This quantitative precision is further corroborated by visual inspection. As illustrated in Fig.~\ref{fig:recon_cifar10}, Fig.~\ref{fig:recon_cifar100}, and Fig.~\ref{fig:recon_afhq}, the reconstructed images are visually indistinguishable from the original ones, with no perceptible loss of detail. Both the quantitative and qualitative results strongly verify that our method successfully preserves the strict bijective consistency and high-fidelity generation quality inherent to flow-based models.




\begin{figure}[htbp]
    \centering
    \includegraphics[width=0.95\linewidth]{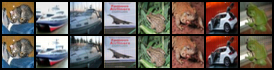} 
    \caption{Reconstruction consistency on the CIFAR-10 test set. \textbf{Top row:} Original real images. \textbf{Bottom row:} Reconstructed images using our Selective Jacobi Decoding. The reconstructions are visually indistinguishable from the original inputs.}
    \label{fig:recon_cifar10}
\end{figure}

\begin{figure}[htbp]
    \centering
    \includegraphics[width=0.95\linewidth]{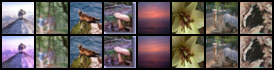} 
    \caption{Reconstruction consistency on the CIFAR-100 test set. \textbf{Top row:} Original real images. \textbf{Bottom row:} Reconstructed images using our Selective Jacobi Decoding.}
    \label{fig:recon_cifar100}
\end{figure}

\begin{figure}[htbp]
    \centering
    \includegraphics[width=0.95\linewidth]{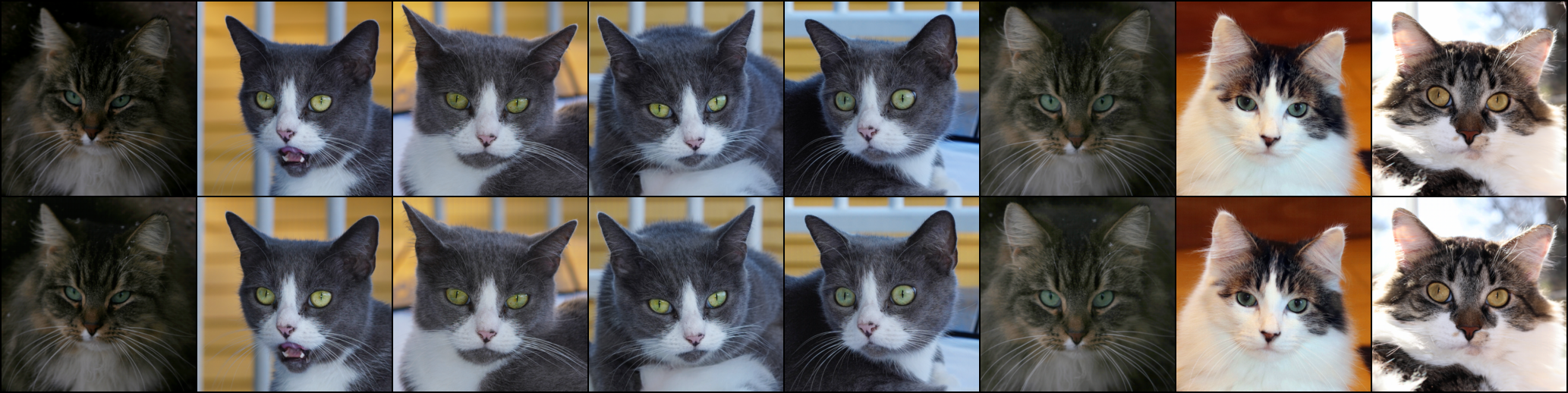} 
    \caption{Reconstruction consistency on the AFHQ test set. \textbf{Top row:} Original real images. \textbf{Bottom row:} Reconstructed images using our Selective Jacobi Decoding.}
    \label{fig:recon_afhq}
\end{figure}

\subsection{Comparative Analysis with GAN and Diffusion Models}
\label[appendix]{sec:comparative_analysis}

To contextualize the practical utility of our proposed acceleration method, we provide a comparative analysis against representative Generative Adversarial Networks (GANs) and Diffusion Models on the CIFAR-10 dataset. For the GAN baseline, we trained FastGAN \citep{zhong2020improving} from scratch using the official implementation. For the diffusion baseline, we evaluated DDIM \citep{songdenoising} using the publicly available \texttt{google/ddpm-cifar10-32} checkpoint at 20 inference steps to establish a comparable speed profile. 

\begin{table}[htbp]
\centering
\caption{Comparison of our method against FastGAN and DDIM on the CIFAR-10 dataset.}
\label{tab:comparison_gan_diffusion}
\begin{tabular}{lcc}
\toprule
\textbf{Method} & \textbf{Inference Time (s)} $\downarrow$ & \textbf{FID} $\downarrow$ \\
\midrule
Fast GAN & 3.41 & 9.67 \\
DDIM (20 steps) & 2.31 & 19.32 \\
Ours & 2.63 & 10.20 \\
\bottomrule
\end{tabular}
\end{table}

As detailed in Table \ref{tab:comparison_gan_diffusion}, our method demonstrates a highly competitive balance between generation speed and quality. It achieves faster inference than FastGAN with only a marginal trade-off in FID. While DDIM at 20 steps is slightly faster, it incurs a severe penalty to generation quality. Compared with DDIM at 20 steps, our method is only marginally slower but achieves substantially better generation quality.

\color{black}

\subsection{More visualized results}
We provide more visualized experimental results on CIFAR-10 and CIFAR-100 in Fig. \ref{fig:vcc10} and Fig. \ref{fig:vcc100}. 
All results consistently confirm the little impact of our method on generation quality.

\begin{figure}[t]
  \centering
      \begin{subfigure}{0.49\linewidth}
    \includegraphics[width=\linewidth]{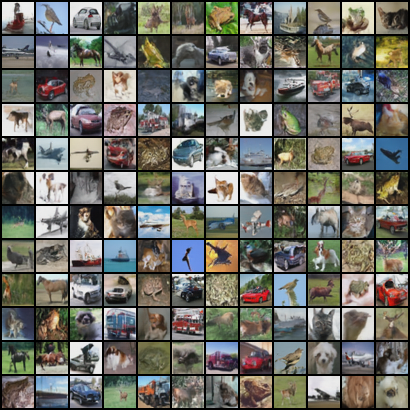}
    \caption{Sequential}
  \end{subfigure}
  \begin{subfigure}{0.49\linewidth}
    \includegraphics[width=\linewidth]{figures/cifar10base.png}
    \caption{Ours, 3.6 times acceleration}
  \end{subfigure}
  \caption{Visualization comparison on CIFAR-10.}
  \label{fig:vcc10}
\end{figure}

\begin{figure}[t]
  \centering
      \begin{subfigure}{0.49\linewidth}
    \includegraphics[width=\linewidth]{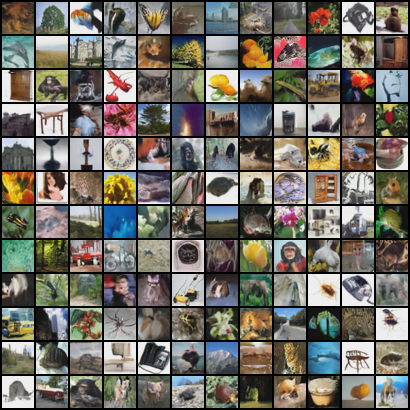}
    \caption{Sequential}
  \end{subfigure}
  \begin{subfigure}{0.49\linewidth}
    \includegraphics[width=\linewidth]{figures/cifar100base.png}
    \caption{Ours, 4.7 times acceleration}
  \end{subfigure}
  \caption{Visualization comparison on CIFAR-100.}
  \label{fig:vcc100}
\end{figure}


\end{document}